\documentclass[11pt]{article}
\usepackage{graphicx} % Required for inserting images

\usepackage{amsmath}
\usepackage{amssymb}
\usepackage{amsthm}
\usepackage{mathtools}
\usepackage{thmtools}
\usepackage{thm-restate}
\usepackage{booktabs}
\usepackage{multirow}
\usepackage{subcaption}
\usepackage{makecell}

\usepackage{times}
\usepackage{authblk}
\usepackage{fullpage}

\usepackage{algorithm}
\usepackage{algorithmic}

\usepackage{ifthen}
\newboolean{isSingleColumn}
\makeatletter
\if@twocolumn
    \setboolean{isSingleColumn}{false}
\else
    \setboolean{isSingleColumn}{true}
\fi

\usepackage{complexity}
\let\E\relax

\usepackage{xcolor}

\usepackage{tikz}
\usetikzlibrary{shapes, arrows.meta, positioning, fit, backgrounds, decorations.pathreplacing, calc, shadows}

\definecolor{patientblue}{RGB}{70, 130, 180}
\definecolor{donorred}{RGB}{220, 80, 80}
\definecolor{clustergray}{RGB}{240, 240, 245}
\definecolor{edgegreen}{RGB}{46, 139, 87}

\tikzset{
    complexheart/.pic={
        \begin{scope}[yscale=-1, shift={(-4.5, -9)}]
            \draw[fill=red!30!white](.456,3.236)
                .. controls (.422,4.168) and (.408,5.095) .. (.461,6.046)
                .. controls (.475,6.228) and (.365,6.400) .. (.379,6.601)
                -- (.819,11.816)
                .. controls (.843,12.194) and (.838,13.389) .. (.900,13.972)
                .. controls (.943,14.340) and (2.870,14.340) .. (2.903,13.972)
                .. controls (2.927,13.699) and (2.932,13.436) .. (2.903,13.169)
                -- (1.847,5.401)
                .. controls (1.914,4.627) and (2.033,3.924) .. (2.129,3.193)
                .. controls (2.177,2.906) and (.441,2.863) .. (.456,3.236) ;
            \draw[fill=red!30!white] (.456,3.236)
                .. controls (.422,4.168) and (.408,5.095) .. (.461,6.046)
                .. controls (.475,6.228) and (.365,6.400) .. (.379,6.601)
                -- (.819,11.816)
                .. controls (.843,12.194) and (.838,13.389) .. (.900,13.972)
                .. controls (.943,14.340) and (2.870,14.340) .. (2.903,13.972)
                .. controls (2.927,13.699) and (2.932,13.436) .. (2.903,13.169)
                -- (1.847,5.401)
                .. controls (1.914,4.627) and (2.033,3.924) .. (2.129,3.193)
                .. controls (2.177,2.906) and (.441,2.863) .. (.456,3.236) ;
            \draw[fill=red!60!white] (.303,6.697)
                .. controls (-.161,7.442) and (-.022,9.641) .. (.484,11.118)
                .. controls (.795,12.022) and (1.297,12.414) .. (2.062,12.801)
                .. controls (3.429,13.513) and (5.112,13.714) .. (6.598,13.795)
                .. controls (9.495,13.957) and (9.738,11.735) .. (8.974,8.967)
                .. controls (8.744,8.126) and (8.123,7.815) .. (7.917,7.222)
                .. controls (7.683,6.606) and (7.310,5.999) .. (6.598,5.420)
                .. controls (5.570,4.579) and (1.230,5.210) .. (.303,6.697) ;
            \fill[fill=red!25!white] (2.540,6.405)
                .. controls (2.296,6.257) and (1.890,6.300) .. (1.608,6.424)
                .. controls (1.274,6.577) and (1.101,6.802) .. (1.001,7.151)
                .. controls (.858,7.652) and (1.025,8.293) .. (1.106,8.795)
                .. controls (1.149,9.072) and (1.149,9.402) .. (1.254,9.684)
                .. controls (1.278,9.746) and (1.278,9.823) .. (1.360,9.870)
                .. controls (1.426,9.909) and (1.412,9.732) .. (1.460,9.651)
                .. controls (1.747,9.144) and (2.210,8.346) .. (2.502,7.834)
                .. controls (2.602,7.657) and (2.631,7.437) .. (2.531,7.303)
                .. controls (2.411,7.146) and (2.206,6.883) .. (2.306,6.768)
                .. controls (2.382,6.682) and (2.621,6.453) .. (2.540,6.405) ;
            \fill[fill=red!25!white] (4.204,6.606)
                .. controls (5.413,6.434) and (5.327,7.543) .. (5.466,7.796)
                .. controls (5.599,8.054) and (6.350,8.785) .. (6.498,9.029)
                .. controls (6.646,9.273) and (7.282,9.947) .. (7.067,10.616)
                .. controls (6.847,11.290) and (5.776,11.582) .. (5.212,11.429)
                .. controls (4.643,11.271) and (3.773,10.941) .. (3.166,11.113)
                .. controls (2.559,11.280) and (1.350,10.793) .. (1.790,10.052)
                .. controls (2.225,9.311) and (3.640,6.692) .. (4.204,6.606) ;
            \draw[fill=yellow] (1.307,3.107)
                .. controls (1.675,3.107) and (1.976,3.169) .. (1.976,3.250)
                .. controls (1.976,3.327) and (1.675,3.394) .. (1.307,3.394)
                .. controls (.934,3.394) and (.633,3.327) .. (.633,3.250)
                .. controls (.633,3.169) and (.934,3.107) .. (1.307,3.107) ;
            \draw[fill=blue!20!red!80!white] (1.307,3.107)
                .. controls (1.675,3.107) and (1.976,3.169) .. (1.976,3.250)
                .. controls (1.976,3.327) and (1.675,3.394) .. (1.307,3.394)
                .. controls (.934,3.394) and (.633,3.327) .. (.633,3.250)
                .. controls (.633,3.169) and (.934,3.107) .. (1.307,3.107) ;
            \fill[fill=red!10!white] (1.024,14.000)
                .. controls (.991,13.929) and (.924,12.146) .. (1.039,12.323)
                .. controls (1.120,12.452) and (1.498,12.753) .. (1.870,12.767)
                .. controls (2.004,12.777) and (1.918,14.129) .. (1.875,14.163)
                .. controls (1.789,14.230) and (1.110,14.173) .. (1.024,14.000) ;
            \fill[fill=red!10!white] (.618,3.432)
                -- (.618,6.247)
                .. controls (.852,5.941) and (1.086,5.745) .. (1.345,5.578)
                .. controls (1.364,5.033) and (1.368,4.125) .. (1.345,3.556)
                .. controls (1.096,3.556) and (.862,3.518) .. (.618,3.432) ;
            \draw[fill=white] (4.872,5.215)
                .. controls (4.944,5.607) and (4.891,5.884) .. (4.595,6.061)
                .. controls (4.389,6.185) and (3.749,6.013) .. (3.864,5.664)
                .. controls (3.974,5.339) and (4.088,5.081) .. (4.346,4.923)
                .. controls (4.533,4.813) and (4.858,5.119) .. (4.872,5.215) ;
            
            \draw[fill=red!80!blue] (5.662,5.665)
                .. controls (5.643,5.732) and (5.624,5.803) .. (5.604,5.880)
                .. controls (5.944,6.076) and (6.355,6.640) .. (6.522,7.314)
                .. controls (6.699,8.031) and (6.336,8.284) .. (5.853,8.796)
                .. controls (5.523,9.154) and (5.313,9.541) .. (5.303,10.014)
                .. controls (5.485,9.106) and (5.844,8.958) .. (6.030,8.877)
                .. controls (6.278,8.767) and (6.221,9.221) .. (6.159,10.029)
                .. controls (6.341,9.412) and (6.298,8.944) .. (6.274,8.752)
                .. controls (6.245,8.561) and (6.589,8.370) .. (6.685,8.093)
                .. controls (6.799,7.777) and (6.924,8.069) .. (6.991,8.208)
                .. controls (7.153,8.547) and (7.311,8.681) .. (7.263,8.987)
                .. controls (7.096,10.033) and (6.780,10.387) .. (6.250,10.650)
                .. controls (6.728,10.492) and (6.833,10.373) .. (6.976,10.230)
                .. controls (6.991,10.440) and (6.900,10.870) .. (6.962,11.209)
                .. controls (6.957,10.669) and (7.277,9.197) .. (7.469,9.044)
                .. controls (7.579,8.953) and (8.296,9.632) .. (8.372,10.139)
                .. controls (8.458,10.698) and (8.936,11.535) .. (8.716,12.213)
                .. controls (8.970,11.458) and (8.535,10.636) .. (8.635,10.445)
                .. controls (8.778,10.641) and (9.060,11.037) .. (9.027,11.257)
                .. controls (8.994,11.477) and (9.170,11.874) .. (9.094,12.333)
                .. controls (9.228,11.883) and (9.065,11.702) .. (9.118,11.377)
                .. controls (9.166,11.052) and (8.979,10.626) .. (8.836,10.359)
                .. controls (8.549,9.828) and (7.889,9.187) .. (7.373,8.595)
                .. controls (7.708,8.796) and (8.535,9.077) .. (8.487,9.369)
                .. controls (8.434,9.661) and (8.860,9.900) .. (9.027,10.306)
                .. controls (8.903,9.928) and (8.587,9.656) .. (8.568,9.321)
                .. controls (8.554,8.982) and (8.023,8.815) .. (7.889,8.762)
                .. controls (7.995,8.752) and (8.253,8.748) .. (8.348,8.834)
                .. controls (8.444,8.920) and (8.702,9.097) .. (8.912,9.044)
                .. controls (8.539,9.044) and (8.463,8.671) .. (8.286,8.657)
                .. controls (8.109,8.647) and (7.641,8.566) .. (7.531,8.470)
                .. controls (7.015,8.016) and (6.599,7.223) .. (6.819,7.271)
                .. controls (7.344,7.381) and (7.899,7.471) .. (8.377,7.988)
                .. controls (8.023,7.567) and (7.999,7.562) .. (7.478,7.314)
                .. controls (7.091,7.137) and (6.537,6.979) .. (6.355,6.511)
                .. controls (6.250,6.229) and (5.887,5.789) .. (5.662,5.665) ;
            \draw[fill=red!80!blue] (3.387,5.951)
                .. controls (3.779,7.003) and (2.665,8.600) .. (1.857,9.866)
                .. controls (1.384,10.607) and (1.704,11.157) .. (2.364,11.305)
                .. controls (2.842,11.410) and (4.515,11.004) .. (4.993,11.023)
                .. controls (5.480,11.037) and (5.604,10.856) .. (5.882,10.559)
                .. controls (5.815,10.894) and (5.256,11.305) .. (4.997,11.209)
                .. controls (4.735,11.114) and (4.199,11.229) .. (3.975,11.338)
                .. controls (4.280,11.425) and (4.787,11.401) .. (5.179,11.601)
                .. controls (5.571,11.802) and (5.920,11.750) .. (6.159,11.300)
                .. controls (6.140,11.730) and (5.538,11.955) .. (5.126,11.783)
                .. controls (4.715,11.616) and (3.736,11.434) .. (3.401,11.434)
                .. controls (3.176,11.429) and (3.009,11.468) .. (2.617,11.563)
                .. controls (3.324,11.635) and (4.252,12.084) .. (5.165,12.304)
                .. controls (5.863,12.471) and (6.680,12.452) .. (6.790,12.189)
                .. controls (6.895,11.946) and (7.072,11.840) .. (7.473,11.793)
                .. controls (7.000,12.008) and (6.866,12.156) .. (6.814,12.438)
                .. controls (7.015,12.490) and (7.411,12.256) .. (7.626,12.457)
                .. controls (7.846,12.658) and (8.439,12.653) .. (8.864,12.381)
                .. controls (8.386,12.730) and (7.770,12.710) .. (7.607,12.557)
                .. controls (7.445,12.405) and (7.077,12.553) .. (6.862,12.591)
                .. controls (6.647,12.634) and (6.446,12.644) .. (5.891,12.548)
                .. controls (5.714,12.620) and (5.657,12.715) .. (5.867,12.792)
                .. controls (6.082,12.835) and (6.537,12.806) .. (7.014,12.806)
                .. controls (7.258,12.806) and (7.430,12.935) .. (7.631,12.811)
                .. controls (7.832,12.682) and (8.033,12.873) .. (8.324,13.016)
                .. controls (7.999,12.906) and (7.746,12.768) .. (7.660,12.897)
                .. controls (7.574,13.031) and (7.263,12.916) .. (7.163,12.940)
                .. controls (7.292,12.992) and (7.493,12.935) .. (7.507,13.136)
                .. controls (7.526,13.337) and (8.057,13.284) .. (7.999,13.652)
                .. controls (7.923,13.294) and (7.540,13.423) .. (7.406,13.179)
                .. controls (7.277,12.940) and (6.885,13.055) .. (6.680,13.031)
                .. controls (6.250,12.973) and (5.557,13.074) .. (5.461,12.639)
                .. controls (5.217,12.600) and (4.921,12.514) .. (4.682,12.419)
                .. controls (4.687,12.538) and (5.103,13.026) .. (5.461,13.102)
                .. controls (5.815,13.174) and (6.532,13.155) .. (6.699,13.681)
                .. controls (6.345,13.174) and (5.614,13.356) .. (5.294,13.260)
                .. controls (4.978,13.160) and (4.213,12.285) .. (3.898,12.189)
                .. controls (3.583,12.099) and (3.272,11.922) .. (3.157,11.989)
                .. controls (3.062,12.084) and (3.119,12.600) .. (3.583,12.581)
                .. controls (4.046,12.562) and (4.362,12.882) .. (4.620,13.356)
                .. controls (4.271,12.897) and (3.721,12.615) .. (3.358,12.744)
                .. controls (2.990,12.868) and (2.823,11.936) .. (2.598,11.812)
                .. controls (2.373,11.683) and (1.594,11.377) .. (1.503,11.042)
                .. controls (1.465,11.377) and (2.106,11.903) .. (2.043,12.677)
                .. controls (1.953,11.922) and (1.522,11.783) .. (1.317,11.377)
                .. controls (1.111,10.966) and (1.212,10.555) .. (1.417,9.991)
                .. controls (1.422,9.971) and (3.836,6.267) .. (2.588,5.985)
                .. controls (2.235,5.904) and (1.981,5.813) .. (1.757,5.660)
                .. controls (1.871,4.489) and (2.187,2.481) .. (2.650,2.405)
                .. controls (3.176,2.314) and (4.185,2.878) .. (4.175,3.289)
                .. controls (4.166,3.724) and (3.817,4.766) .. (3.994,5.201)
                .. controls (3.759,5.397) and (3.277,5.617) .. (3.387,5.951) ;
            \fill[fill=red!90!blue!70!white] (2.942,5.378)
                .. controls (2.875,5.339) and (2.736,5.378) .. (2.722,5.459)
                .. controls (2.712,5.535) and (2.741,5.583) .. (2.779,5.674)
                .. controls (3.013,6.066) and (3.200,6.716) .. (2.980,7.266)
                .. controls (2.803,7.720) and (2.564,8.226) .. (2.296,8.661)
                .. controls (2.048,9.068) and (1.627,9.670) .. (1.565,10.363)
                .. controls (1.579,10.105) and (1.837,9.737) .. (1.952,9.512)
                .. controls (2.263,8.905) and (3.071,7.815) .. (3.214,7.180)
                .. controls (3.362,6.534) and (3.243,5.875) .. (3.071,5.597)
                .. controls (3.066,5.540) and (3.028,5.425) .. (2.942,5.378) ;
            \fill[fill=red!90!blue!70!white] (2.793,2.424)
                .. controls (2.253,3.542) and (2.115,3.948) .. (2.014,5.712)
                .. controls (2.019,5.784) and (2.349,5.903) .. (2.339,5.841)
                .. controls (2.301,4.665) and (2.650,3.289) .. (3.138,2.486)
                .. controls (3.032,2.447) and (2.870,2.419) .. (2.793,2.424) ;
            \draw (2.526,6.104)
                .. controls (2.990,5.937) and (2.541,6.688) .. (2.545,6.673)
                .. controls (2.454,6.831) and (2.521,6.869) .. (2.564,6.922)
                .. controls (2.717,7.123) and (2.932,7.165) .. (2.660,7.471) ;
            \draw (.461,3.241)
                .. controls (.576,3.657) and (2.187,3.657) .. (2.129,3.198) ;
            \draw[fill=red!30!white] (3.687,5.731)
                .. controls (3.520,6.229) and (5.541,6.616) .. (5.556,6.214)
                .. controls (5.570,5.827) and (5.790,5.325) .. (5.800,4.919)
                .. controls (5.809,4.584) and (4.480,4.250) .. (4.308,4.517)
                .. controls (4.084,4.866) and (3.854,5.249) .. (3.687,5.731) ;
            \draw (4.299,4.531)
                .. controls (4.189,4.885) and (5.752,5.315) .. (5.800,4.961) ;
            \draw[fill=blue!20!red!80!white] (5.097,4.594)
                .. controls (5.436,4.694) and (5.699,4.823) .. (5.680,4.919)
                .. controls (5.656,5.009) and (5.360,5.000) .. (5.020,4.899)
                .. controls (4.681,4.799) and (4.428,4.675) .. (4.452,4.584)
                .. controls (4.475,4.493) and (4.757,4.493) .. (5.097,4.594) ;
        \end{scope}
    }
}

\usepackage{hyperref}

\usepackage{natbib}
\usepackage{enumitem}

\usepackage[capitalize,noabbrev]{cleveref}

\newtheorem{theorem}{Theorem}

\theoremstyle{definition}

\newtheorem{assumption}[theorem]{Assumption}

\crefname{assumption}{Assumption}{Assumptions}
\Crefname{assumption}{Assumption}{Assumptions}

\theoremstyle{remark}
\newtheorem{remark}[theorem]{Remark}

\newcommand{\N}{\mathbb{N}}
\newcommand{\E}{\mathbb{E}}
\newcommand{\pr}{\mathbb{P}}
\newcommand{\defeq}{\coloneqq}
\newcommand{\opt}{\mathrm{OPT}}

\newcommand{\alg}{\mathsf{ALG}}
\newcommand{\alggood}{\mathsf{ALG}_{\text{good}}}
\newcommand{\algbad}{\mathsf{ALG}_{\text{bad}}}
\newcommand{\Ugood}{\mathcal{U}_{\text{good}}}
\newcommand{\Ubad}{\mathcal{U}_{\text{bad}}}

\newcommand{\PSI}{\mathsf{PSI}}
\newcommand{\HCR}{\mathsf{HCR}}

\newcommand{\baralggood}{\overline{\alg}_{\text{good}}}
\newcommand{\baralgbad}{\overline{\alg}_{\text{bad}}}

\newcommand{\optgood}{\mathrm{OPT}_{\text{good}}}
\newcommand{\optbad}{\mathrm{OPT}_{\text{bad}}}

\newcommand{\baroptgood}{\overline{\mathrm{OPT}}_{\text{good}}}

\title{Near-Optimal Dynamic Matching via Coarsening with Application to Heart Transplantation}

\author[1]{Itai Zilberstein\protect\footnotemark[1]}
\author[1]{Ioannis Anagnostides\thanks{Equal contribution. Correspondence to \texttt{sandholm@cs.cmu.edu}}}
\author[2]{Zachary W. Sollie}
\author[2]{Arman Kilic}
\author[1,3]{Tuomas Sandholm}

\affil[1]{Department of Computer Science, Carnegie Mellon University, Pittsburgh, PA}
\affil[2]{Department of Surgery, Division of Cardiothoracic Surgery, Medical University of South Carolina, Charleston, SC}
\affil[3]{Additional affiliations: Strategy Robot, Inc., Strategic Machine, Inc., Optimized Markets, Inc.}

\begin{document}

\maketitle
\thispagestyle{empty}

\begin{abstract}
    Online matching has been a mainstay in domains such as Internet advertising and organ allocation, but practical algorithms often lack strong theoretical guarantees. We take an important step toward addressing this by developing new online matching algorithms based on a \emph{coarsening} approach. Although coarsening typically implies a loss of granularity, we show that, to the contrary, aggregating offline nodes into capacitated clusters can yield near-optimal theoretical guarantees. We apply our methodology to heart transplant allocation to develop theoretically grounded policies based on structural properties of historical data. Furthermore, in simulations based on real data, our policy closely matches the performance of the omniscient benchmark, achieving competitive ratio 0.91, drastically higher than the US \emph{status quo} policy's 0.51. Our work bridges the gap between data-driven heuristics and pessimistic theoretical lower bounds.
\end{abstract}

%Specifically, we identify a key structural property in the historical data---the presence of large patient groups with similar transplant benefits---that renders our coarsening approach highly effective. 

%Old version

%The current policy for heart transplant allocation in the US fails to integrate more granular estimates of pretransplant mortality risk and post-transplant survival prospects. While many promising data-driven approaches have been put forward in recent years, developing improved policies with provable guarantees is challenging due to the inherent online nature of the allocation process. In this paper, we bridge the gap between data-driven heuristics and pessimistic theoretical algorithms. We identify a structural property in the historical data---the presence of large patient clusters with similar transplant benefits---and leverage this property to develop a near-optimal allocation policy \emph{vis-\`a-vis} the omniscient benchmark endowed with full foresight. Our results provide rigorous justification for prior data-driven approaches in organ allocation based on clustering.
\newpage
\setcounter{page}{1}

\section{Introduction}

Online matching addresses the problem of dynamic resource allocation under uncertainty. In this model, a set of offline resources is known in advance, while requests arrive sequentially and must be irrevocably matched to maximize a cumulative objective. The crux lies in the online nature of the arrivals: the decision maker must balance the immediate benefit of a match against the opportunity cost of consuming a resource that may be more valuable in the future. Online matching has been a mainstay in domains such as Internet advertising~\citep{Huang24:Online,Mehta07:Adwords}. Its role is even more consequential in organ allocation, where the resources are scarce and life-saving.

Heart transplantation remains the therapy of choice for patients suffering from end-stage heart failure. However, the scarcity of donor organs has created a wide chasm between supply and demand~\citep{Cameli22:Donor}. This disparity necessitates an allocation mechanism that ensures efficient distribution of donor organs. Allocating donor hearts is inherently \emph{online}. Organs arrive sequentially, and decisions must be made irrevocably in a dynamic fashion. This immediacy is dictated by biological constraints: donor hearts have a limited viability window known as ischemic time, which precludes delaying transplantation. As a result, when a heart becomes available, the policy must immediately determine the recipient without the benefit of knowing future donors. 
%This places heart transplant allocation in the purview of online matching.

The US heart transplant allocation policy underwent a significant revision in  2018, transitioning from a 3-tier to a 6-tier system to better stratify candidate urgency~\citep{Kilic21:Evolving}. However, a critical limitation remains: the framework fails to integrate more granular estimates of pretransplant mortality risk and post-transplant survival prospects~\citep{Shore20:Changes,Zhang24:Development}. Instead, the policy relies on rigid, hierarchical priority statuses---handcrafted by medical committees---serving as coarse proxies of medical severity, often conflating clinically heterogeneous patients within the same tier. Furthermore, by focusing on waitlist severity, the system risks allocating scarce organs to candidates with low post-transplant survival~\citep{Cogswell20:Early}. These deficiencies have prompted calls for machine learning and other data-driven approaches to improve efficiency. One recent example is the framework of \emph{continuous distribution}, which has been deployed for lung allocation~\citep{Papalexopoulos24:Reshaping}.

In high-stakes domains, such as organ allocation, the deployment of policies demands rigorous, provable guarantees. Relying on heuristics can lead to suboptimal policies and introduces the risk of unforeseen failure modes. However, a considerable gap exists in the literature. On the one hand, the theoretical community studying online matching problems focuses on an overly pessimistic worst-case analyses. These works rely either on adversarial arrivals or general distributions that do not reflect the particular structure of medical data (\Cref{sec:related-matching}), resulting in overly conservative policies. On the other hand, applied policy optimization focuses on simulation-driven approaches that, while data-informed, inherently lack provable guarantees.

\begin{figure}
    \centering
    \scalebox{0.6}{\begin{tikzpicture}[
    font=\sffamily,
    >=Stealth,
    patient/.style={circle, fill=patientblue, inner sep=0pt, minimum size=0.15cm},
    % Donor is just a coordinate holder
    donor/.style={inner sep=0pt, minimum size=0pt},
    cluster/.style={circle, draw=patientblue!50, fill=clustergray, thick, minimum size=2.5cm, align=center},
    donortype/.style={rounded corners, draw=donorred!80, fill=white, thick, minimum width=2cm, minimum height=1cm, align=center},
    edge_weight/.style={midway, fill=white, inner sep=1pt, text=edgegreen, font=\footnotesize\bfseries}
]

% --- 1. The Timeline (Online Arrivals) ---
\draw[->, thick, gray] (-4, 4) -- (-4, -4.5) node[below] {Time};

% --- DONOR HEARTS (Complex Pic) ---
\node[
  donor,
  label={left:$t_1$}
] (d1) at (-4, 3)
  {\tikz{\pic[scale=0.06]{complexheart};}};

\node[
  donor,
  label={left:$t_2$}
] (d2) at (-4, 1.5)
  {\tikz{\pic[scale=0.06]{complexheart};}};

\node[
  donor,
  label={left:$t_3$}
] (d3) at (-4, -1)
  {\tikz{\pic[scale=0.06]{complexheart};}};

\node[
  donor,
  label={left:$t_4$}
] (d4) at (-4, -3)
  {\tikz{\pic[scale=0.06]{complexheart};}};

\node[align=center, font=\bfseries\small] at (-4, 4.5) {Online donor\\arrivals};

% --- 2. The Abstraction Layer (Donor Types) ---
\node[donortype] (typeA) at (-1, 2) {\scriptsize{Donor type A}};
\node[donortype] (typeB) at (-1, -2) {\scriptsize{Donor type B}};

% Mapping actual donors to types
\draw[dashed, gray] (d1) -- (typeA);
\draw[dashed, gray] (d2) -- (typeB);
\draw[dashed, gray] (d3) -- (typeA);
\draw[dashed, gray] (d4) -- (typeB);

% --- 3. The Patient Clusters (Offline Side) ---

% Cluster 1
\node[cluster] (c1) at (4, 2.5) {};
\node[above=0.1cm of c1, text=patientblue, font=\bfseries] {Patient cluster 1};

% Patients inside Cluster 1 (Dots)
\foreach \x/\y in {3.7/3.2, 4.3/2.5, 4.0/1.8} {
    \node[patient] at (\x,\y) {};
}

% Cluster 2
\node[cluster] (c2) at (4, -2.5) {};
\node[above=0.1cm of c2, text=patientblue, font=\bfseries] {Patient cluster 2};

% Patients inside Cluster 2
\foreach \x/\y in {3.8/-2.2, 4.5/-2.7, 3.6/-2.8} {
    \node[patient] at (\x,\y) {};
}

% --- 4. The Matching Edges ---

% Edges from Donor Types to Clusters
\draw[->, thick, edgegreen] (typeA.east) -- (c1.west) node[edge_weight, above=2pt] {High benefit};
\draw[->, thick, gray!50] (typeA.east) -- (c2.west);

\draw[->, thick, edgegreen] (typeB.east) -- (c2.west) node[edge_weight, below=2pt] {};
\draw[->, thick, gray!50] (typeB.east) -- (c1.west) node[midway, fill=white, inner sep=1pt, text=gray, font=\footnotesize] {Incompatible};

\end{tikzpicture}}
    \hspace{1cm}
    \includegraphics[scale=0.4]{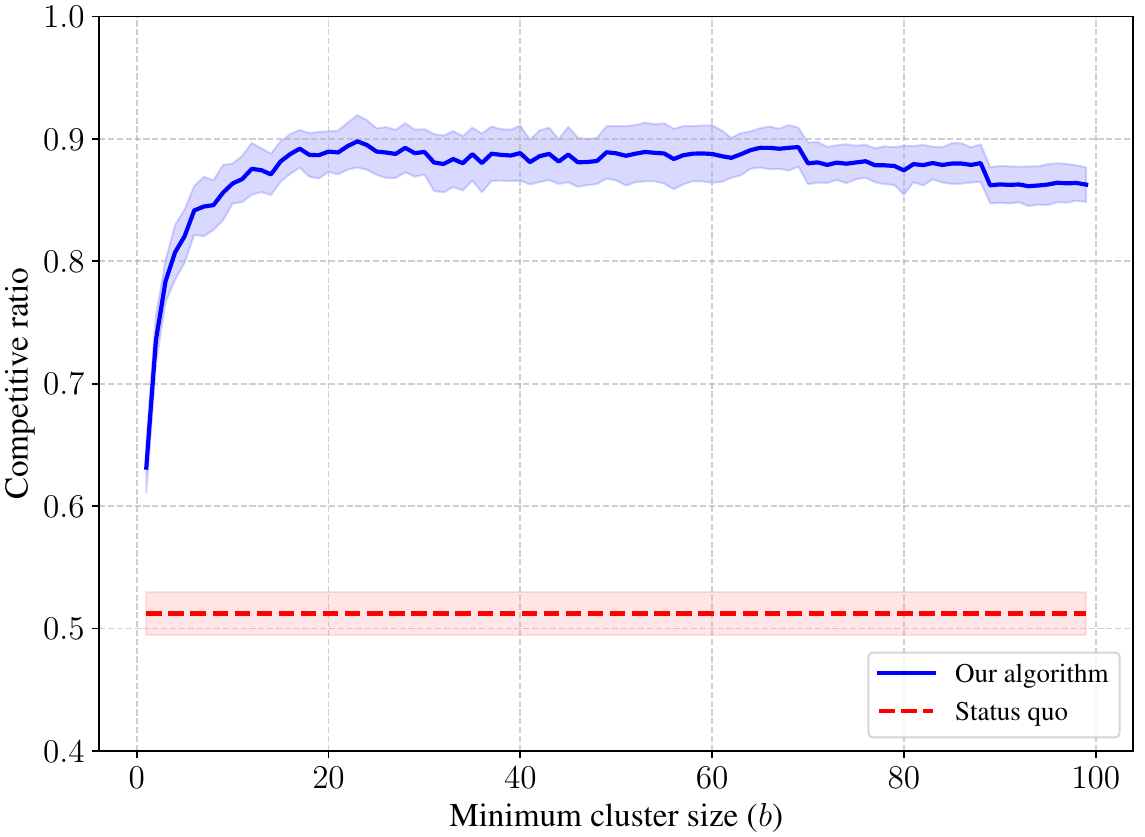}
    \caption{Left: Schematic illustration of our coarsening-based approach. Right: Competitive ratio of our algorithm compared against the post-2018 US \emph{status quo} allocation for different budgets $b$ evaluated in simulation from January to June 2019. }
    \label{fig:clusters}
\end{figure}

\subsection{Our contributions}

In this paper, we take an important step toward bridging this gap by developing theoretically grounded policies with strong practical performance. Our approach relies on a general online matching framework that we introduce based on \emph{coarsening}, whereby offline nodes are carefully aggregated into capacitated clusters (\Cref{fig:clusters}). While coarsening typically implies a loss of fidelity, we show that, to the contrary, leveraging such clusters can yield near-optimal theoretical guarantees. 

We apply our general methodology to heart transplant allocation. We identify a structural condition within the \textit{United Network for Organ Sharing (UNOS)} registry data, which comprises decades-long historical records. Specifically, by analyzing the (bipartite) graph weighted by expected life years gained, we reveal the presence of large clusters of patients who share similar edge weights. This enables us to circumvent standard worst-case lower bounds in the online matching literature, and renders our coarsening-based approach highly effective.

We then leverage this structural observation to derive a provably near-optimal policy. The goal is to optimize the \emph{competitive ratio}, which measures how well we perform \emph{vis-\`a-vis} the \emph{omniscient} policy which allocates with full foresight; that is, the competitive ratio captures the price of uncertainty caused by not knowing the future. We prove that when patient clusters are large and exhibit small error---a condition we corroborate in the UNOS data---the competitive ratio of our online algorithm approaches 1, that is, as good as matching with perfect foresight. In simulations on real UNOS data, we find that this theoretical result translates to strong practical performance: our policy delivers a competitive ratio of 0.91 (\Cref{fig:clusters}). This is drastically higher than non-capacitated stochastic matching's 0.63 and the US \emph{status quo} policy's 0.51. 

From a technical standpoint, our results are driven by a connection we make to the \emph{$b$-matching problem} from the literature on Internet advertising. In that context, $b$ serves as the \emph{budget} of the advertiser, which we interpret to be the size of the patient cluster in our context. This correspondence allows us to harness a rich repertoire of algorithmic techniques for organ allocation, adapting them to accommodate the specific constraints present in our problem. Our framework theoretically grounds heart transplant allocation in a decades-long line of work in online algorithms, and we anticipate that this connection will be fruitful in future research as well.

%We summarize our main contributions below.
%\begin{enumerate}[noitemsep]
%    \item We introduce a general coarsening-based framework for online stochastic matching and establish a near-optimal competitive ratio under cluster structure.
%    \item We apply this framework to heart transplant allocation, validating that real-world data contains large patient clusters sharing similar transplant benefits. We demonstrate through realistic simulations that our policy delivers a near-optimal competitive ratio.
%\end{enumerate}

\iffalse
\begin{figure}[t]
    \centering
    \ifthenelse{\boolean{isSingleColumn}}
    {
        % Use 0.5 column width
        \includegraphics[width=0.5\linewidth]{figures/competitive_ratio_summary.pdf}
    }
    {
        % Use full column width
        \includegraphics[width=\linewidth]{figures/competitive_ratio_summary.pdf}
    }
    \caption{Competitive ratio of our algorithm compared against the post-2018 US \emph{status quo} allocation for different budgets $b$ evaluated in simulation from January to June 2019.}
    \label{fig:competitive_ratio-summary}
\end{figure}
\fi

\section{Related work}

\paragraph{Organ allocation} Much of the work on data-driven organ allocation restricts optimization to simple policy classes. For instance, in the context of kidney transplantation, \citet{Bertsimas2013fairness} consider policies based on a point system that ranks patients according to fixed priority criteria. Similarly, the continuous distribution framework of~\citet{Papalexopoulos24:Reshaping} relies on a composite score, reducing the problem to optimizing over a small, fixed set of weights. The \emph{FutureMatch} approach of~\citet{Dickerson2015:Futurematch}, originally developed for kidney exchange and recently adapted to heart transplantation allocation~\citep{Anagnostides25:Policy}, is similarly based on a heuristic addition referred to as ``potentials''~\citep{Dickerson2012:Dynamic}. In the realm of survival estimation, the \emph{OrganITE} framework prioritizes matches based on the estimated individual treatment effects and organ scarcity~\citep{Berrevoets20:OrganITE}.  Its successor, \emph{OrganSync}~\citep{Berrevoets21:Learning}, moves closer to our approach by employing multiple priority queues---one for each derived organ type.

While these approaches have the advantage of interpretability and computational tractability, they inherently restrict the solution space. We argue that such low-dimensional parameterizations and heuristic approaches are not expressive enough to capture the full complexity of the online matching problem.

Patient clustering has also previously been employed in organ allocation frameworks such as \emph{OrganSync} \citep{Berrevoets21:Learning}. Their motivation for using clustering revolves around statistical estimation---to derive more robust counterfactual estimates of post-transplant survival---while we use them for algorithmic purposes. Although the underlying policies are considerably different, our paper is complementary and provides a missing theoretical underpinning for such clustering-based methods. Conversely, the framework of~\citet{Berrevoets21:Learning} further supports our approach: coarsening not only yields the algorithmic improvements we establish, but also mitigates estimation bias.

\paragraph{Online matching} There has been considerable work on online matching problems, primarily motivated by Internet advertising applications~\citep{Huang24:Online,Mehta10:Online}. Online matching is modeled on a bipartite graph $G = (U, V, E)$, where $U$ is the set of offline nodes and $V$ is the set of nodes arriving online. In the basic version, each offline node $u \in U $ can be matched to at most one online node. Of particular relevance to us is the \emph{$b$-matching} generalization of the problem, in which a node $u \in U$ can be matched to up to $b \in \N$ nodes in $V$~\citep{Kalyanasundaram00:Optimal,Brubach16:New,Alaei12:Online,Devanur09:Adwords}. In the context of Internet advertising, $U$ represents the set of \emph{advertisers}, while $V$ is the stream of \emph{impressions}. The edge set $E$ captures the compatibility, where an edge $(u, v)$ exists if advertiser $u$ is interested in bidding on impression $v$. 

In practical settings, to enhance computational speed of optimizing the dispatch plan (using a linear or integer program), automated abstraction methods are often employed to bundle inventory or segments, effectively grouping distinct instances into larger $b$-matching constraints~\citep{Walsh2010:Automated,Peng2016:Scalable}. 
%
%Our approach in~\Cref{sec:theory} leverages the framework of~\citet{Brubach16:New} which also accommodates stochastic rewards. 
%
However, to our knowledge, such ideas have not been employed for the purpose of hedging against future uncertainty, nor have they been applied to  organ allocation for any reason. We provide further background on online matching in~\Cref{sec:related-matching}.

\section{Coarsening-based online stochastic matching}
\label{sec:theory}

In this section, we present our theoretical results concerning our coarsening-based approach for online stochastic matching. To begin with, we recall the basic approach of~\citet{Brubach16:New} for the special case of $b$-matching with stochastic rewards.

\subsection{Linear programming formulation of $b$-matching}

In the $b$-matching problem, it is assumed that each offline node $u \in U$ possesses a uniform integral capacity $b$, meaning that it can be matched at most $b$ times. The goal is to maximize the \emph{competitive ratio} of the algorithm $\alg$, defined as
\[
c = \frac{\E[\alg]}{\E[\opt]}.
\]
To obtain an algorithm with a near-optimal competitive ratio, \citet{Brubach16:New} consider the following linear program, which is to be solved \emph{once} offline.
\begin{align}
\max \quad & \sum_{e \in E} w_e f_e p_e \label{eq:obj} \\
\text{s.t.} \quad & \sum_{e \in \partial(u)} f_e p_e \leq b \quad \forall u \in U, \label{eq:capacity} \\
& \sum_{e \in \partial(v)} f_e \leq r_v \quad \forall v \in V. \label{eq:arrival}
\end{align}

Above, $r_v$ is the arrival rate of node $v$; if $T \in \N$ is the total number of rounds, $\sum_{v \in V} r_v = T$. The arrival rates are allowed to be fractional. $\partial (u)$ denotes the set of edges incident to $u$. For each edge $e \in E$, $p_e$ denotes the probability that the edge $e = (u, v)$ will be present when $v$ is assigned to $u$. The stochastic process underpinning $p_e$ is assumed to be independent of the stochastic arrival of each $v$. In the context of organ allocation, $p_e$ has a clear interpretation: after a match is made, there is a probability that it will not proceed to transplantation. While our theoretical analysis incorporates the edge failure $p_e$, our simulations assume that the identified matches are successful.

\Cref{alg:bmatching} presents the basic approach of~\citet{Brubach16:New} for the $b$-matching problem.

\begin{algorithm}[t]
\caption{Stochastic matching for $b$-matching ($\mathsf{SM}_b$)}
\label{alg:bmatching}
\begin{algorithmic}[1]
\INPUT weighted graph $G=(U, V, E)$, capacity $b$, arrival rates $\{r_{v} \}_{v \in V}$.
\STATE \textbf{Offline Phase}:
\STATE Compute an optimal solution $\{f_e\}_{e \in E}$ to the linear program \eqref{eq:obj}--\eqref{eq:arrival}.
\STATE \textbf{Online Phase}:
% CHANGE: Make this a loop to trigger indentation
\FOR{each arriving node $v \in V$} 
    \FOR{each neighbor $u$}
        \STATE Independently (attempt to) match $(u, v)$ with probability $\frac{f_{u,v}}{r_v}$.
    \ENDFOR
\ENDFOR
\end{algorithmic}
\end{algorithm}

Notably, ~\Cref{alg:bmatching} is \emph{non-adaptive}. We also point out that it can match a node $u$ even after its budget has been exceeded, in which case the match becomes indeterminate. The key point, however, is that no matter how those indeterminacies are resolved, \Cref{alg:bmatching} attains a near-optimal competitive ratio:

\begin{restatable}[\citealp{Brubach16:New}]{theorem}{bmatching}
\label{thm:bmatching}
For edge-weighted online stochastic $b$-matching with arbitrary arrival rates and stochastic rewards, the online algorithm $\mathsf{SM}_b$ (\Cref{alg:bmatching}) achieves a competitive ratio of at least $1 - b^{-1/2+\epsilon} - e^{-b^{2 \epsilon }/3}$ for any $\epsilon > 0$.
\end{restatable}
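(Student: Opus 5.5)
The plan is to compare $\mathsf{SM}_b$ against the LP benchmark, edge by edge and offline node by offline node.

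\textbf{Step 1: the LP upper bounds $\E[\opt]$.} Let $X_e$ be the expected number of times the optimal offline policy assigns along $e$. Then $\sum_{e\in\partial(v)} X_e \le r_v$, since $v$ arrives $r_v$ times in expectation. Also $\sum_{e\in\partial(u)} X_e p_e \le b$, since successful matches at $u$ never exceed $b$. The rewards from the stochastic edge realizations are independent of the policy's decision to probe. So $\{X_e\}$ is feasible for \eqref{eq:obj}--\eqref{eq:arrival}, and $\E[\opt]\le \sum_e w_e f_e p_e \eqdef \mathrm{LP}$.

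\textbf{Step 2: local analysis at each offline node.} Fix $u$. In each of the $T$ rounds, the arriving $v$ is drawn with probability $r_v/T$, and we attempt $(u,v)$ with probability $f_{u,v}/r_v$. Conditional on the attempt, it succeeds with probability $p_{u,v}$, independently. Thus the number of successful matches attempted at $u$ is $S_u=\sum_{t=1}^T Z_t$, where the $Z_t$ are independent Bernoulli variables across rounds, with mean $\mu_u=\sum_{e\in\partial(u)} f_e p_e\le b$.

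The algorithm collects the weight of successful attempts as long as $u$ has capacity. Here I would use a worst-case resolution: once more than $b$ successes occur, all attempts at $u$ are treated as yielding nothing. Since the edge identity in each round is independent of the event $\{S_u\le b\}$ up to conditioning, the weight collected at $u$ is at least $\sum_{e\in\partial(u)} w_e f_e p_e \cdot \Pr[\text{the relevant success is among the first } b]$.

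A cleaner route is to keep only successes whose rank is at most $b$. By symmetry across rounds (i.i.d. rounds), each success has the same conditional chance of being counted. That chance is at least $\E[\min(S_u,b)]/\E[S_u]$. Hence the expected reward at $u$ is at least
\[
\frac{\E[\min(S_u,b)]}{\mu_u}\sum_{e\in\partial(u)} w_e f_e p_e .
\]

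\textbf{Step 3: concentration (main obstacle).} We need $\E[\min(S_u,b)]\ge (1-b^{-1/2+\epsilon}-e^{-b^{2\epsilon}/3})\mu_u$ whenever $\mu_u\le b$. Since the ratio is monotone in $\mu_u$, the worst case is $\mu_u=b$. Write $\E[\min(S,b)]=\mu-\E[(S-b)^+]$. I expect this to be the delicate step, since naive Chernoff bounds alone do not give the stated form.

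The plan is to split on $\delta=b^{-1/2+\epsilon}$ and bound
\[
\E[\min(S,b)] \ge b(1-\delta)\Pr[S\ge b(1-\delta)] .
\]
A multiplicative Chernoff lower-tail bound gives $\Pr[S<(1-\delta)b]\le e^{-\delta^2 b/2}=e^{-b^{2\epsilon}/2}\le e^{-b^{2\epsilon}/3}$. This yields $\E[\min(S,b)]\ge b(1-\delta)(1-e^{-b^{2\epsilon}/3})\ge b(1-\delta-e^{-b^{2\epsilon}/3})$.

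For $\mu_u<b$, I would first note a simpler monotonicity: replacing $\mu$ by $\mu'\le b$ only decreases the overflow fraction. This follows because $\E[\min(S,b)]/\mu$ is nonincreasing in the per-round probabilities, which can be argued by coupling (scaling down the Bernoullis).

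\textbf{Step 4: summation.} Summing the per-node bound over $u\in U$ gives $\E[\alg]\ge(1-b^{-1/2+\epsilon}-e^{-b^{2\epsilon}/3})\,\mathrm{LP}$. Combined with Step 1, this yields the stated competitive ratio, no matter how the indeterminate over-budget matches are resolved.
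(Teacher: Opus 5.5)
Your proof is correct. It bounds the same quantity as the paper, but the concentration step takes a different route.

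\textbf{Same quantity.} Write $A_t$ for the uncapped count of successes at $u$ before round $t$. Since $Z_t$ is independent of $A_t$,
\[
\E[\min(S_u,b)] = \sum_t \Pr[Z_t=1,\, A_t\le b-1] = \frac{\mu_u}{T}\sum_{t=1}^T \Pr[A_t\le b-1].
\]
So your per-node formula coincides with the paper's per-edge formula $\E[X_e]=\frac{f_ep_e}{T}\sum_t\Pr[A_t\le b-1]$. Your formula is in fact an equality, not just a lower bound. The precise reason is that, conditional on the success pattern $(Z_1,\dots,Z_T)$, the edge labels of the successes are i.i.d.\ with law $f_ep_e/\mu_u$ and independent of their ranks. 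That is the correct form of your ``symmetry'' sentence; as written, it wrongly suggests every success is equally likely to be counted. The earlier ``worst-case resolution'' phrasing in Step~2 should be dropped. Read literally as zero reward at $u$ whenever $S_u>b$, it would fail, since $\Pr[S_u>b]$ is about $1/2$ at $\mu_u=b$.

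\textbf{Different bounding.} The paper truncates in \emph{time}. It discards rounds after $(1-\delta)T$ and applies an upper-tail Chernoff bound to the running count $A_{t+1}$, whose mean is at most $(1-\delta)b$. Because that bound only needs an upper bound on the mean, underloaded nodes ($\mu_u<b$) are handled with no extra work.

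You truncate in \emph{count}: a lower-tail Chernoff bound on the total $S_u$ at the extremal load $\mu_u=b$. This forces your extra thinning lemma, and the lemma is genuinely needed: for $\mu_u=b/2$, say, the event $S_u\ge(1-\delta)b$ is itself unlikely. The lemma is valid, because $\E[\min(\mathrm{Bin}(s,\lambda),b)]\ge\lambda\min(s,b)$ for every $s$, and the case $b>T$ is trivial.

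\textbf{Trade-off.} Your route gives the slightly better constant $e^{-b^{2\epsilon}/2}$. It also shows the guarantee depends only on the law of the total count, not on the time structure. The paper's route is shorter and needs no monotonicity argument. Your remark that ``naive Chernoff bounds alone do not give the stated form'' is therefore too pessimistic, since the paper's time truncation is exactly a direct Chernoff argument. Finally, your Step~1 spells out the LP-relaxation argument that the paper only asserts.
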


More concretely, a budget of size $b = 100$ delivers in expectation at least $\approx 90 \%$ of the optimal in hindsight. For completeness, we provide the proof of~\Cref{thm:bmatching} in~\Cref{sec:proofs}, together with all other proofs of claims in our paper. In what follows, we use the shorthand notation $\alpha(b) \defeq \sup_{\epsilon > 0} 1 - b^{-1/2+\epsilon} - e^{-b^{2 \epsilon }/3}$.

\subsection{Coarsening-based matching}

We now build on the previous result to develop coarsening-based algorithms for online matching. We assume that each node $u \in U$ belongs to a \emph{cluster} consisting of $b$ individual nodes (for example, patients), which we denote by $\{ u_1, \dots, u_b \}$. (Without any loss, we can assume that $|U|$ is divisible by $b$.) Unlike the standard $b$-matching setting where all $b$ copies are identical, here we assume that the nodes within a cluster are heterogeneous but similar. For the sake of the analysis, we assume for now that a $b$-clustering $\mathcal{U}$ is given, but ultimately our algorithm proceeds by constructing these clusters (\Cref{alg:clustered_matching}).

To begin with, we posit a \emph{uniform} bound on the intra-cluster heterogeneity. Specifically, we make the following assumption.

\begin{assumption}[Bounded cluster variance]
\label{ass:cluster_relative}
There exists a $b$-clustering $\mathcal{U}$ partitioning the node set $U$ such that for every cluster $u \in \mathcal{U}$, node $v \in V$, and any node $u_k \in \{u_1, \dots, u_b\}$ within the cluster, there exists $\bar{w}_{u, v}$ such that the true weights satisfy
\begin{equation}
    \label{eq:rel-error}
    (1 - \delta(b)) \bar{w}_{u,v} \leq w_{u_k,v} \leq (1 + \delta(b)) \bar{w}_{u,v}
\end{equation}
for some $\delta(b) \in [0,1)$.
\end{assumption}
Specifically, \eqref{eq:rel-error} imposes a \emph{relative} error bound. We will refer to $\bar{w}_{u,v}$ as the \emph{representative weight} for matching node $v$ to cluster $u$. To simplify the exposition, we will assume throughout that the edge failures satisfy $p_{u_k, v} = p_{u, v}$; our analysis can accommodate some error here as well. 

\Cref{ass:cluster_relative} makes explicit the fact that the intra-cluster error is a function of the cluster's size. There is a clear tradeoff that we will return to: increasing the size of the cluster $b$ can improve the competitive ratio in the representative instance when invoking~\Cref{thm:bmatching}, but at the cost of increasing $\delta(b)$.

To extend~\Cref{thm:bmatching} under~\Cref{ass:cluster_relative}, the basic idea is to solve the linear program in \eqref{eq:obj}--\eqref{eq:arrival} using the \emph{representative weights}. In~\Cref{sec:proofs}, we show that this approach delivers a competitive ratio of at least $\alpha(b) (1 - 2 \delta(b))$. 

Now, in our setting, $b$ is not a fixed constraint, but a design parameter to be optimized. Our analysis reveals a way to tune it: find the size of the cluster from a set of parameters $\mathcal{B}$ that maximizes $\alpha(b) (1 - 2 \delta(b))$ with respect to $b$. This whole process is summarized in~\Cref{alg:clustered_matching}.

\begin{restatable}{theorem}{clusteredbmatching}
\label{thm:clustered_bmatching}
Consider a weighted online stochastic matching problem satisfying \Cref{ass:cluster_relative}. If~$\mathsf{ALG}$ is set to~\Cref{alg:clustered_matching},
\begin{equation*}
\frac{\E[\mathsf{ALG}]}{\E[\opt]} \geq \max_{b \in \mathcal{B}} \alpha(b) ( 1 - 2 \delta(b)).
\end{equation*}
\end{restatable}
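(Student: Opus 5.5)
We first fix $b \in \mathcal{B}$ and the $b$-clustering $\mathcal{U}$ given by \Cref{ass:cluster_relative}. The goal for this fixed $b$ is to show $\E[\alg] \geq \alpha(b)(1-2\delta(b))\,\E[\opt]$. Since \Cref{alg:clustered_matching} selects the $b$ maximizing $\alpha(b)(1-2\delta(b))$, taking the maximum over $\mathcal{B}$ then gives the theorem.

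We introduce the representative instance $\bar G$. In it, each cluster $u$ becomes a single offline node of capacity $b$ with edge weights $\bar w_{u,v}$, and the probabilities $p_{u,v}$ and rates $r_v$ are unchanged. Let $\overline{\opt}$ denote the offline optimum on $\bar G$ and $\overline{\alg}$ the value of $\mathsf{SM}_b$ run on $\bar G$ but evaluated with weights $\bar w$. By \Cref{thm:bmatching}, and after taking the supremum over $\epsilon$, we have $\E[\overline{\alg}] \geq \alpha(b)\,\E[\overline{\opt}]$.

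It remains to relate both quantities to the true instance, which takes two comparisons.

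\textbf{Step 1 (lower bound on $\alg$).} Consider the same random run of the algorithm. Each realized match of $v$ to cluster $u$ is resolved to some $u_k$ that still has free capacity; any consistent assignment works, since at most $b$ successful matches are kept per cluster. The true weight of that match is at least $(1-\delta(b))\bar w_{u,v}$ by \eqref{eq:rel-error}. Hence $\E[\alg] \geq (1-\delta(b))\,\E[\overline{\alg}]$ pathwise.

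\textbf{Step 2 (upper bound on $\opt$).} Take the hindsight optimum on the true instance and map every match $(u_k,v)$ to the cluster edge $(u,v)$. Each cluster receives at most $b$ matches, because its $b$ members have unit capacity. So this is a feasible $b$-matching on $\bar G$ for the same realization, and its weight changes by at most a factor $1+\delta(b)$. This gives $\opt \leq (1+\delta(b))\,\overline{\opt}$ realization by realization, and hence in expectation.

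One subtlety is that \Cref{thm:bmatching} is proved against the LP value \eqref{eq:obj}--\eqref{eq:arrival}, which upper bounds $\E[\overline{\opt}]$. Step 2 can equally be done at the LP level: the aggregated expected fractional solution of the true optimum is feasible for the cluster LP. Either way we combine the two steps to get
\[
\frac{\E[\alg]}{\E[\opt]} \geq \frac{1-\delta(b)}{1+\delta(b)}\,\alpha(b) \geq (1-2\delta(b))\,\alpha(b).
\]
The last inequality holds because $(1-\delta)\geq(1-2\delta)(1+\delta)$ is equivalent to $2\delta^2\geq 0$.

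The main obstacle is Step 1. There the indeterminacy resolution must be made precise, and we must check that the guarantee of \Cref{thm:bmatching} transfers to the real run. The point is that \Cref{thm:bmatching} bounds the expected number and weight of the first $b$ successful matches per node. Any assignment of those matches to distinct cluster members yields true weight within the $(1-\delta(b))$ factor, and the remaining attempted matches are simply discarded.
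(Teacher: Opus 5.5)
Your proposal is correct and follows essentially the same route as the paper. You pass to the representative $b$-matching instance and invoke \Cref{thm:bmatching} there. You then lose a factor $(1-\delta(b))$ on the algorithm side by resolving each cluster match to a free member, and a factor $(1+\delta(b))^{-1}$ on the benchmark side by mapping the omniscient matching onto clusters. Your explicit check that $\frac{1-\delta}{1+\delta}\geq 1-2\delta$ and your remark that Step 2 can equally be done at the LP level are small clarifications the paper leaves implicit.
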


\begin{algorithm}[t]
\caption{Clustered stochastic matching ($\mathsf{CSM}$)}
\label{alg:clustered_matching}
\begin{algorithmic}[1]
\INPUT weighted graph $G = (U, V, w)$, rates $\{r_{v} \}_{v \in V}$.
\STATE \textbf{Offline phase}:
\STATE Initialize $c^* \leftarrow 0$, configuration $(\mathcal{U}^*, b^*) \leftarrow (\emptyset, 1)$.
\FOR{each size $b \in \mathcal{B}$}
    \STATE Find a partition $\mathcal{U}_b$ of the node set $U$ into size-$b$ clusters minimizing $\delta(b)$ (per \Cref{ass:cluster_relative}).
    \STATE Calculate a lower bound on the competitive ratio $c \leftarrow \alpha(b) (1 - 2 \delta(b))$.
    \IF{$c > c^*$}
        \STATE $c^* \leftarrow c, \mathcal{U}^* \leftarrow \mathcal{U}_b, b^* \leftarrow b$.
    \ENDIF
\ENDFOR
\STATE Compute an optimal solution $\{f_e \}_{e \in E}$ of the LP \eqref{eq:obj}--\eqref{eq:arrival} w.r.t. $(\mathcal{U}^*, V, \bar{w})$ with capacity $b^*$.
\STATE \textbf{Online phase}:
\FOR{each arriving node $v \in V$}
    \FOR{each cluster $u \in \mathcal{U}^*$}
        \STATE Independently (attempt to) match $v$ to any free $u_k \in u$ with probability $\frac{f_{u,v}}{r_v}$.
    \ENDFOR
\ENDFOR
\end{algorithmic}
\end{algorithm}

\Cref{alg:clustered_matching} abstracts away the cluster construction; we address the specific implementation of this step in~\Cref{sec:clustering}.

\paragraph{Heterogeneous error analysis} While our previous analysis assumes a uniform bound across all clusters, this can be overly conservative in applications where intra-cluster error varies significantly, such as heart transplant allocation (\Cref{sec:app}). We address this in~\Cref{thm:clustered_avg} (\Cref{sec:proofs}), where we relax this assumption to allow for cluster-dependent errors. Our analysis depends on (\textit{ex post}) \emph{average weighted error}, showing that high errors are tolerable provided they occur in low-value clusters or clusters that are infrequently selected. A simplified bound along these lines is given later in~\Cref{thm:bad_clusters}.

\paragraph{Accounting for errors in the weights} Prior work in online matching typically assumes precise knowledge of edge weights. In organ allocation, an edge weight typically represents survival benefit, which is merely a statistical estimate. In heart transplantation, predicting graft survival is a notoriously challenging problem, with even state-of-the-art models exhibiting a high predictive error~\citep{Lee18:Deephit}. This highlights the need for robust algorithms that remain effective even under substantial estimation error.

In this context, we observe that our analysis can accommodate these estimation errors. Specifically, it suffices to work under the following relaxed assumption.
\begin{assumption}[Bounded estimation error]
\label{ass:imprecise_weights}
Let $\{w^*_e\}_{e \in E}$ denote the unknown, true edge weights. We assume that the given weights $\{w_e\}_{e \in E}$ are such that for all $e \in E$,
\[ (1 - \eta ) w_e \leq w^*_e \leq (1+\eta)w_e, \]
where $\eta \in [0, 1)$ represents the relative error margin.
\end{assumption}
In other words, while performance is measured in terms of the true edge weights $\{ w^*_e \}_{e \in E}$, we only have access to $\{ w_e \}_{e \in E}$. With this, we can extend~\Cref{thm:clustered_bmatching} as follows.

\begin{theorem}
    \label{thm:bounded-est}
Consider a weighted online stochastic matching problem with $\eta$-bounded estimation error (\Cref{ass:imprecise_weights}) satisfying \Cref{ass:cluster_relative} relative to the given weights. Let $\mathsf{ALG}$ be set to~\Cref{alg:clustered_matching}. Then
\begin{equation*}
\frac{\E[\mathsf{ALG}]}{\E[\opt]} \geq (1 - 2\eta) \max_{b \in \mathcal{B}} \alpha(b) ( 1 - 2 \delta(b)).
\end{equation*}
\end{theorem}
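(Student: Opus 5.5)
We reduce to \Cref{thm:clustered_bmatching} by a sandwich argument that compares the quality of any matching under the true weights $w^*$ and the given weights $w$. Write $\alg^*$ and $\opt^*$ for the value of the algorithm's matching and of the omniscient matching measured in the true weights $w^*$. Write $\alg$ and $\opt$ for the same quantities measured in the given weights $w$. \Cref{alg:clustered_matching} only ever sees $w$: its clusters, $\delta(b)$, representative weights and LP solution are all computed from $w$. Hence the random matching it outputs is exactly the one analyzed in \Cref{thm:clustered_bmatching} applied to the instance with weights $w$. Since \Cref{ass:cluster_relative} holds relative to $w$, that theorem gives
\[
\E[\alg] \geq \max_{b \in \mathcal{B}} \alpha(b)(1-2\delta(b)) \, \E[\opt].
\]

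Next we transfer each side to the true weights edge by edge. Any matching $M$ has nonnegative edge weights, and its realized value is the sum over successfully matched edges. So \Cref{ass:imprecise_weights} gives, pointwise on every sample path,
\[
(1-\eta) w(M) \leq w^*(M) \leq (1+\eta) w(M).
\]
For the algorithm, the matched set is the same whichever weights we use to score it, so $\E[\alg^*] \geq (1-\eta)\E[\alg]$. For the benchmark, we use that the omniscient optimum in $w$ is at least the $w$-value of the $w^*$-optimal matching. This gives $\opt \geq w(M^*) \geq \opt^*/(1+\eta)$ pointwise, and hence $\E[\opt] \geq \E[\opt^*]/(1+\eta)$.

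Chaining these inequalities yields
\[
\frac{\E[\alg^*]}{\E[\opt^*]} \geq \frac{1-\eta}{1+\eta} \max_{b \in \mathcal{B}} \alpha(b)(1-2\delta(b)).
\]
It remains to note that $\frac{1-\eta}{1+\eta} \geq 1-2\eta$. This is equivalent to $1-\eta \geq 1 - \eta - 2\eta^2$, which holds trivially.

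The main subtlety is making sure \Cref{thm:clustered_bmatching} is applied to the right benchmark. Its proof in \Cref{sec:proofs} bounds $\E[\alg]$ against the LP relaxation value under $w$, which upper bounds $\E[\opt]$ in $w$. We should check that we only need $\E[\opt]$ in $w$ rather than the LP itself. If the LP appears directly, the same argument still works: the $w$-LP value is at least $\frac{1}{1+\eta}$ times the $w^*$-LP value, which in turn upper bounds $\E[\opt^*]$.

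We also need to handle how indeterminate matches are resolved. They are resolved identically regardless of which weights score them, so the pointwise comparison above remains valid.
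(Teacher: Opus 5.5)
Your argument is correct and is essentially the paper's route. The paper gives no separate proof and presents the result as a direct extension of \Cref{thm:clustered_bmatching}; the implied reasoning is your sandwich: lose $1-\eta$ on the algorithm's side and $(1+\eta)^{-1}$ on the benchmark's side, then use $(1-\eta)/(1+\eta) \geq 1-2\eta$, just as the factor $1-2\delta(b)$ arises from $(1-\delta(b))/(1+\delta(b))$ in the proof of \Cref{thm:clustered_bmatching}. The one tacit step, which the paper shares, is that multiplying through by $\max_{b \in \mathcal{B}} \alpha(b)(1-2\delta(b))$ requires that quantity to be nonnegative, which is the intended regime.
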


To conclude the section, we strengthen~\Cref{thm:clustered_bmatching} by assuming that a small fraction of clusters can have a large error.

\begin{assumption}[Most clusters have bounded error]
\label{ass:most_cluster_good}
There exists a $b$-clustering $\mathcal{U}$ that can be partitioned into $\Ugood$ and $\Ubad = \mathcal{U} \setminus \Ugood$ such that the following properties hold.
\begin{itemize}
    \item For every cluster $u \in \Ugood$, node $v \in V$, and node $u_k \in \{u_1, \dots, u_b\}$ within the cluster, there exists $\bar{w}_{u, v}$ such that
\begin{equation*}
    (1 - \delta(b)) \bar{w}_{u,v} \leq w_{u_k,v} \leq (1 + \delta(b)) \bar{w}_{u,v}
\end{equation*}
for some $\delta(b) \in [0, 1)$.
    \item There exists $\rho(b) \ll 1$ such that the weight of any matching on $\Ubad$ is at most $\rho(b) \opt$.
\end{itemize}
\end{assumption}
Under this assumption, we obtain the following bound on the competitive ratio.

\begin{restatable}{theorem}{badclusters}
\label{thm:bad_clusters}
    Consider an edge-weighted online stochastic matching problem satisfying \Cref{ass:most_cluster_good}. There exists an algorithm $\mathsf{ALG}$ such that
    \begin{equation*}
        \E[\alg] \geq \max_{b \in \mathcal{B}} \alpha(b) (1 - \rho(b)) ( 1 - 2 \delta(b)) \E[\opt].
    \end{equation*}
\end{restatable}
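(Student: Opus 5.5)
The algorithm discards the bad clusters and runs \Cref{alg:clustered_matching} only on the good clusters $\Ugood$. The key comparison is against the optimal offline matching restricted to $\Ugood$. Fix $b \in \mathcal{B}$ together with a clustering $\Ugood \cup \Ubad$ as in \Cref{ass:most_cluster_good}. The resulting sub-instance $G_{\text{good}}$ has offline side $\bigcup_{u \in \Ugood}\{u_1,\dots,u_b\}$, the same online side $V$, and the same rates $\{r_v\}$. It therefore satisfies \Cref{ass:cluster_relative} with error $\delta(b)$.

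First I would decompose the offline optimum. For every realization of arrivals and edge outcomes, the optimal matching $M^*$ in hindsight splits into $M^*_{\text{good}}$, consisting of edges incident to nodes in good clusters, and $M^*_{\text{bad}}$. Since $M^*_{\text{bad}}$ is a matching on $\Ubad$, \Cref{ass:most_cluster_good} gives $w(M^*_{\text{bad}}) \le \rho(b)\,\opt$. Because $M^*_{\text{good}}$ is feasible in $G_{\text{good}}$, the optimum of the sub-instance satisfies $\optgood \ge w(M^*_{\text{good}}) \ge (1-\rho(b))\opt$ pointwise. Taking expectations yields $\E[\optgood] \ge (1-\rho(b))\E[\opt]$.

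Next I would apply \Cref{thm:clustered_bmatching} to $G_{\text{good}}$, running the LP and the online phase of \Cref{alg:clustered_matching} with representative weights on the good clusters only. Read as a guarantee for the fixed $b$, not only for the maximizing choice, it gives $\E[\alggood] \ge \alpha(b)(1-2\delta(b))\E[\optgood]$. The theorem's proof establishes exactly this for each $b$. Because the online algorithm never touches bad clusters, $\E[\alg] = \E[\alggood]$. Chaining the two inequalities gives $\E[\alg] \ge \alpha(b)(1-\rho(b))(1-2\delta(b))\E[\opt]$.

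Finally, the algorithm chooses $b$ to maximize $\alpha(b)(1-\rho(b))(1-2\delta(b))$ over $\mathcal{B}$, with the corresponding clustering, which yields the stated maximum.

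The main subtlety is interpreting the hypothesis ``the weight of any matching on $\Ubad$ is at most $\rho(b)\opt$.'' I would read it as holding for every realization, which is what allows the pointwise decomposition above. If it is meant only in expectation, I would instead use linearity: $\E[w(M^*_{\text{bad}})] \le \rho(b)\E[\opt]$ suffices equally well. A secondary point is that \Cref{thm:clustered_bmatching} must be applied to the sub-instance, whose online nodes may now have no neighbors at all. This is harmless, since the LP and the proof of \Cref{thm:bmatching} allow arbitrary rates and edge sets.
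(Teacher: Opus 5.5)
Your proposal is correct and takes essentially the paper's route. You lower-bound the hindsight optimum restricted to the good clusters by splitting the optimal matching along $\Ugood$ and $\Ubad$, which gives $\E[\optgood] \ge (1-\rho(b))\E[\opt]$. You then apply the fixed-$b$ chain from the proof of \Cref{thm:clustered_bmatching} (namely $\alpha(b)$ together with $\frac{1-\delta(b)}{1+\delta(b)} \ge 1-2\delta(b)$) to the good clusters. The only difference is cosmetic: the paper keeps $\Ubad$ in the instance with representative weights set to zero and uses nonnegativity of weights to get $\E[\alg] \ge \E[\alggood]$, whereas you delete those clusters outright, which is equivalent and slightly cleaner.
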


On top of~\Cref{ass:most_cluster_good},
we can also account for estimation error per~\Cref{ass:imprecise_weights}, incurring a multiplicative degradation of $1 - 2\eta$ in the competitive ratio (\Cref{thm:bounded-est-most}).

\begin{remark}
    The results presented in this section do not require the clusters to be of equal size. In the general case, $b$ in our bounds denotes the minimum cluster size.
\end{remark}
\section{Capacitated $b$-clustering}\label{sec:clustering}

\Cref{alg:clustered_matching} relies on a subroutine to partition the set of offline vertices $U$ into \textit{capacitated} clusters, meaning each group has size at least $b$. To maximize the competitive ratio derived in \Cref{thm:clustered_bmatching}, we desire a clustering algorithm to minimize the intra-cluster error. We stress that any capacitated clustering approach is compatible with \Cref{alg:clustered_matching}, provided it satisfies the minimum size constraint.

However, the specific choice of clustering algorithm is constrained by the scale of real-world heart transplant data. A typical bipartite graph representing a period of multiple months involves thousands of waiting patients and potential donors, resulting in millions of edges. Finding an optimal clustering that minimizes the cluster radius subject to minimum size constraints is an \NP-hard problem~\citep{Aloise2009:NP}. Exact formulations, such as MIP-based capacitated clustering~\citep{Mulvey1984:Solving}, scale poorly with the size of the graph.

To address the scale of the problem, we evaluate three heuristic approximations: \textit{constrained agglomerative clustering}~\citep{Ward1963:Hierarchical}, \textit{constrained $k$-means}~\citep{Bradley2000:Constrained}, and \textit{recursive bisection}~\citep{Karypis2000:Comparison}. More details of these algorithms are provided in \Cref{sec:appendix-clustering}. In fact, we show in \Cref{sec:results} that the specific choice of clustering algorithm has little impact on the effectiveness of our approach.

\subsection{Distance measures and representative weights}

Let $U$ denote the set of offline vertices (patients) in the bipartite graph. Associated with each vertex $u \in U$ is a utility vector representing the edge weights to the online vertices.

The clustering approaches utilize the standard Euclidean distance between the utility vectors. While our theoretical guarantees rely on the maximum relative error, directly optimizing this objective is computationally prohibitive at scale. Minimizing Euclidean distance is a tractable proxy, as it implicitly constrains the worst-case deviations in dense datasets. One could consider other measures such as the $\ell_\infty$ norm. However, there is a tradeoff between the two measures; using $\ell_\infty$ optimizes the worst-case at the expense of the average. 

Once the clustering $\mathcal{U}$ is obtained, we compute the representative weights required for the offline linear program in \Cref{alg:clustered_matching}. For each cluster $u \in \mathcal{U}$ comprising nodes $\{u_1, \dots, u_{b}\}$, where $b$ is the size of the cluster, and for each online vertex $v \in V$, the representative weight $\bar{w}_{u,v}$ is the mean of the constituent utility vectors:
\begin{equation*}
    \bar{w}_{u,v} = \frac{1}{b} \sum_{i=1}^{b} w_{u_i, v}.
\end{equation*}
The representative vector $\bar{w}_{u}$ is the expected utility obtained if vertex $v$ is routed to cluster $u$ and matched to a node chosen uniformly at random from within that cluster. Consequently, the representative weight provides an unbiased estimator of the realized utility in the online phase.
\section{Application to heart transplant allocations}
\label{sec:app}

We model the patient-donor matching problem for heart transplant allocation with a bipartite graph. The set of offline nodes $U$ corresponds to the set of patients on the waitlist. In the $b$-matching model, each node $u \in U$ is a representative patient type (\textit{i.e.}, cluster). Each online node $v \in V$ corresponds to a donor arrival. We define an edge $e$ between a patient and donor node if the donor is compatible with the patient (\emph{e.g.}, based on clinical factors/geography). The edge weight $w_e$ quantifies the predicted utility of the transplant. We adopt the utilitarian metric of \textit{population life years gained}, which measures the expected increase in survival time for a patient receiving a donor organ versus remaining on the waitlist. This is a standard metric used in organ allocation research  (\emph{e.g.},~\citealp{Berrevoets20:OrganITE,Berrevoets21:Learning}). There are other natural objectives to consider---such as fairness with respect to different demographic groups---and our framework can be adapted accordingly. For example, the edge weights can be adjusted to prioritize marginalized populations, as was done in kidney exchange~\citep{Dickerson2015:Futurematch}.

To estimate these weights, we train a Cox proportional hazards model on historical outcomes dating back to 1987~\citep{Cox1972:Regression}. More sophisticated models have been developed~\citep{Lee18:Deephit}, but their concordance is only marginally better than Cox regression.

\subsection{Waitlist population stability}
\label{sec:waitlist_stability}

We begin by motivating the use of online stochastic $b$-matching by illustrating the stability of the distribution of edge weights within the waitlist population. While the specific composition of the transplant waitlist changes as patients are added or removed, the underlying distribution of potential match qualities often remains consistent. To quantify this, we track the \emph{population stability index} ($\PSI$), a statistical measure of change in a distribution of data between two datasets. A $\PSI$ value below $0.1$ indicates that the distributions are nearly identical, while values between $0.1$ and $0.25$ suggest a minor shift, and values exceeding $0.25$ signify a major change in population characteristics. The index is calculated by binning the data into $N$ buckets and comparing the frequencies of the expected and actual distributions:
$$ \mathsf{PSI} = \sum_{i=1}^N \left( P_i - Q_i\right) \ln \frac{P_i}{Q_i}$$
where $P_i$ and $Q_i$ represent the fraction of observations falling into the $i$-th bin for the actual and expected datasets, respectively.

\ifthenelse{\boolean{isSingleColumn}}
{
\begin{figure}[t!]
    \centering
    \begin{subfigure}[b]{0.24\linewidth}
        \centering
        \includegraphics[width=\linewidth]{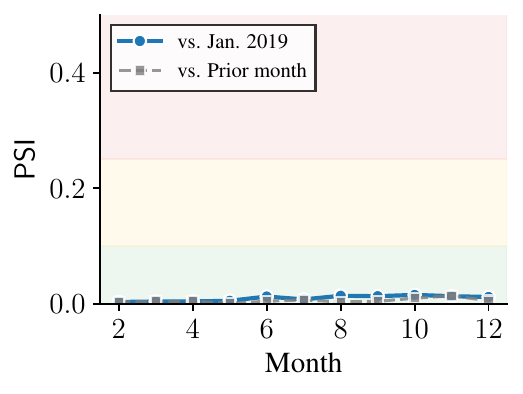}
        \caption{Blood type O.}
        \label{fig:psi_o}
    \end{subfigure}
    \hfill
    \begin{subfigure}[b]{0.24\linewidth}
        \centering
        \includegraphics[width=\linewidth]{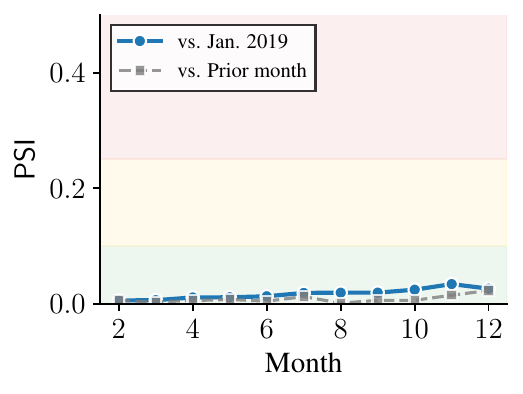}
        \caption{Blood type A.}
        \label{fig:psi_a}
    \end{subfigure}
    \hfill
    \begin{subfigure}[b]{0.24\linewidth}
        \centering
        \includegraphics[width=\linewidth]{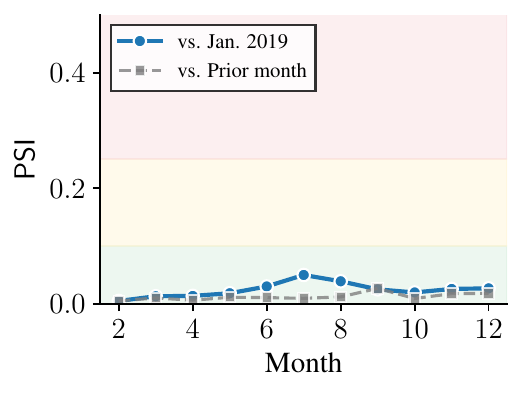}
        \caption{Blood type B.}
        \label{fig:psi_b}
    \end{subfigure}
    \hfill
    \begin{subfigure}[b]{0.24\linewidth}
        \centering
        \includegraphics[width=\linewidth]        {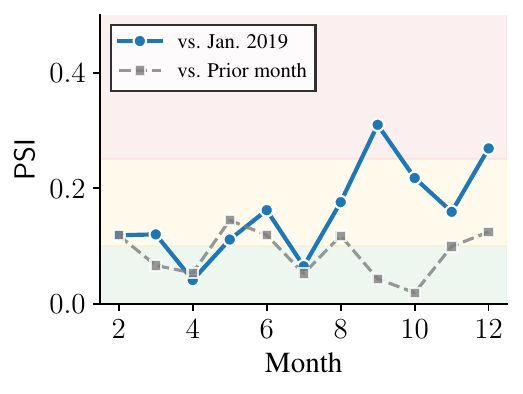}
        \caption{Blood type AB.}
        \label{fig:psi_ab}
    \end{subfigure}    
    \caption{Population stability index ($\PSI$) of the distribution of waitlist edge weights across blood types over 2019. We report the $\PSI$ measured against the baseline month of January 2019 and each prior month. A $\PSI \le 0.1$ constitutes a stable population while a $\PSI\ge 0.25$ denotes major distributional shift. We take $N=10$ bins.}
    \label{fig:psi}
\end{figure}
}
{    
\begin{figure}[t!]
    \centering
    \begin{subfigure}[b]{0.49\linewidth}
        \centering
        \includegraphics[width=\linewidth]{figures/psi_blood_type_O.pdf}
        \caption{Blood type O.}
        \label{fig:psi_o}
    \end{subfigure}
    \hfill
    \begin{subfigure}[b]{0.49\linewidth}
        \centering
        \includegraphics[width=\linewidth]{figures/psi_blood_type_A.pdf}
        \caption{Blood type A.}
        \label{fig:psi_a}
    \end{subfigure}
    \vspace{0.5em} % Vertical spacing between rows
    \begin{subfigure}[b]{0.49\linewidth}
        \centering
        \includegraphics[width=\linewidth]{figures/psi_blood_type_B.pdf}
        \caption{Blood type B.}
        \label{fig:psi_b}
    \end{subfigure}
    \hfill
    \begin{subfigure}[b]{0.49\linewidth}
        \centering
        \includegraphics[width=\linewidth]        {figures/psi_blood_type_AB.pdf}
        \caption{Blood type AB.}
        \label{fig:psi_ab}
    \end{subfigure}    
    \caption{Population stability index ($\PSI$) of the distribution of waitlist edge weights across blood types over 2019. We report the $\PSI$ measured against the baseline month of January 2019 and each prior month. A $\PSI \le 0.1$ constitutes a stable population while a $\PSI\ge 0.25$ denotes major distributional shift. We take $N=10$ bins.}
    \label{fig:psi}
\end{figure}
}

Figure \ref{fig:psi} illustrates the $\PSI$ for edge weights for adult heart transplantation candidates in the US throughout 2019. We separate the patient population by blood type as ABO compatibility is the primary determinant of match viability. We find that for the most prevalent blood types (O, A, and B), the distribution of edge weights is remarkably stable over time, consistently remaining below the $0.1$ threshold. The observed instability in the AB blood type is likely an artifact of its rarity, where small fluctuations in low-frequency counts lead to disproportionate inflation of the $\PSI$.

This distributional stability provides a strong foundation for online stochastic $b$-matching, as it ensures that the offline linear program remains representative of the waitlist landscape even as individual patients change. 

\subsection{Discretizing donor types}

We approximate the continuous space of donor attributes with a finite set of discrete types to facilitate online stochastic matching. Although the theoretical bipartite matching model supports an unbounded number of online nodes, there is a practical tradeoff between distributional granularity and computational tractability. We therefore discretize donor profiles into a manageable set of representative nodes to limit the complexity of the offline linear program and prevent excessive granularity in the arrival rates.

We generate discretized donor types by clustering the UNOS registry of deceased donors dating back to 1987. We partition the historical donor set into four disjoint subsets corresponding to blood types O, A, B, and AB. Blood type compatibility determines edge existence in the bipartite graph. Therefore, partitioning the data ensures that blood type remains consistent across a cluster. Within each partition, we apply $k$-means clustering on a feature vector of $34$ clinical attributes. To determine the optimal number of representative types, we examine the tradeoff between model complexity and information loss (see \Cref{tab:donor_clustering_results}). As the number of clusters grows to $k=1,000$ we obtain near perfect reconstruction. With $k=250$ clusters, we capture approximately $88$--$93\%$ of the variance regardless of the sub-population size. The results indicate that the high-dimensional donor space can be efficiently compressed with minimal information loss. We fix $k=250$ resulting in $1,000$ total discrete donor types across the four blood types to balance accuracy with scalability. For each donor type, we compute the arrival probability as the fraction of historical donors assigned to the representative cluster and define the attributes using the cluster centroid.

\subsection{Near-optimal heart transplant allocation}\label{sec:results}

We now demonstrate that using a variation of~\Cref{alg:clustered_matching} in combination with patient clustering yields near-optimal heart transplant allocations. We construct a waitlist using the UNOS registry from January through June 2019, comprising 3,113 patients. We partition the patients into groups of capacity at least $b$ using the methods described in \Cref{sec:clustering}. We simulate distinct month-long horizons where donor arrivals follow a Poisson process parameterized by the historical arrival rates. While we assume that all offers proceed to transplant, our framework naturally accommodates stochastic acceptance by interpreting edge weights as expected values as described in~\Cref{sec:theory}.

We adapt~\Cref{alg:clustered_matching} to show both theoretical and practical outcomes. We initialize the capacity of each representative node to the actual cluster size rather than the uniform lower-bound $b$. This prevents artificial bottlenecks in large, low-priority groups and ensures viable matches are not unnecessarily wasted. We also use a conservative dispatch protocol; if there are no available patients to match according to the offline strategy, the organ is discarded. One might hypothesize that this dispatch unfairly penalizes the $b=1$ baseline. However, we show in \Cref{sec:appendix-resampling} that our framework retains its dominance even when provisional re-routing is permitted to avoid discarding. Within a chosen cluster, our algorithm selects an available patient uniformly at random.  We employ this selection policy to show that performance gains are driven by the coarsening approach, rather than any sophisticated patient selection. We show in \Cref{sec:appendix-greedy} that a greedy intra-cluster selection policy yields even higher competitive ratios, surpassing greedy matching baselines in market-clearing scenarios. In reality, the patient selection may be driven by clinical factors such as location, urgency, or waiting time. 

We begin by analyzing the error introduced by the clustering approaches. We report the average and maximum \textit{mean normalized absolute error (NMAE)} (reviewed in~\Cref{sec:appendix-hcr}) in \Cref{fig:cluster-error}. We examine NMAE because the standard relative error is sensitive to near-zero denominators. The relative error of the clustering grows significantly. However, these extreme errors occur in clusters with low transplant benefit; clusters for whom a specific donor type provides almost zero benefit. Since the offline linear program maximizes total utility, it naturally routes flow away from these suboptimal clusters. Consequently, the error in these low-utility clusters has negligible impact on the empirical performance. The NMAE better captures the error in good clusters where actual matches occur. While the average NMAE remains near 0, the maximum climbs to 0.5, indicating that while most clusters are compact, the worst-case grows with $b$. NMAE can also be used as a heuristic to adjust the capacity $b$ (\Cref{sec:appendix-clustering}). An alternative method is to simulate the stochastic process and record outcomes for different values of $b$.

\begin{figure}[t]
    \centering
    \ifthenelse{\boolean{isSingleColumn}}
    {
        % Use 0.5 column width
        \includegraphics[width=0.5\linewidth]{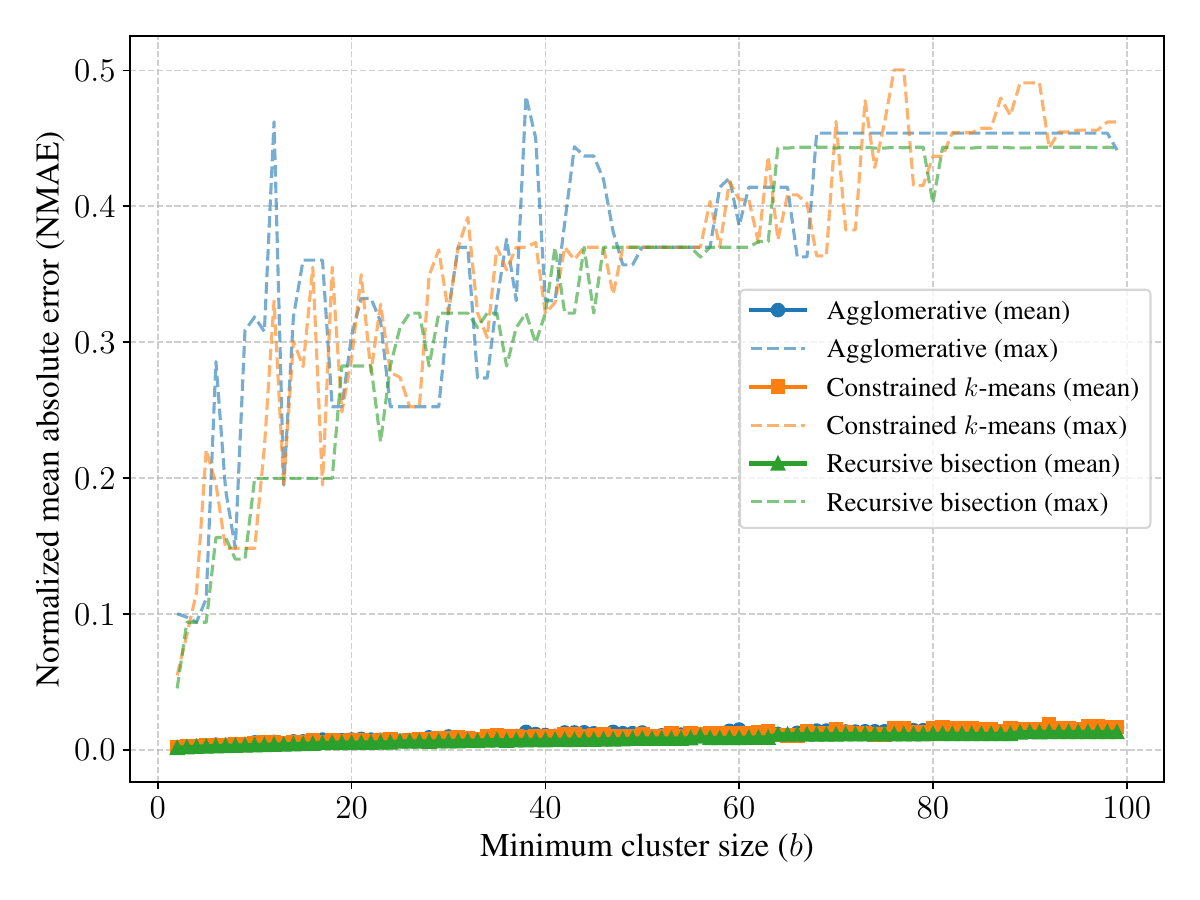}
    }
    {
        % Use full column width
        \includegraphics[width=0.99\linewidth]{figures/plot_nmae_analysis.pdf}
    }
    \caption{Normalized mean absolute error of clustering algorithms across cluster sizes.}
    \label{fig:cluster-error}
\end{figure}

\Cref{fig:competitive_ratio} displays the actual competitive ratio of our algorithm against the optimal offline hindsight solution evaluated in simulation. We also compare against the current \emph{status quo} policy in the US for allocating adult heart transplants (see~\Cref{appendix:statusquo} for more details). For the baseline case of $b=1$ (\textit{i.e.}, standard online stochastic matching), performance is consistent with the theoretical guarantee of $1-1/e$. Notably, as we increase the cluster size $b$, the competitive ratio improves significantly, peaking at $0.91$. For most values of $b$, the choice of clustering algorithm has little impact on the competitive ratio, demonstrating the robustness of the framework. Even for very coarse clustering, performance remains stable, only suffering a slight decline for large $b$. On the other hand, the \emph{status quo} policy achieves an average competitive ratio just above 0.5, significantly below our approach.

To contextualize the results, the optimal omniscient algorithm delivers an average utility of around 4,650 life years per month. This corresponds to approximately 300 to 400 successful transplants. Our coarsening-based approach delivers roughly 100 more transplants per month than the $b=1$ baseline, and 150 more than the \emph{status quo}. The substantial gain at modest $b$ shows that clustering acts as a buffer against uncertainty. By routing donors to a group rather than an individual, the algorithm smooths out the inefficiencies of stochastic arrivals without losing the precision needed for high-quality matching. 

Our approach achieves a dual advantage: it obtains more utility than individual routing while simultaneously reducing the run time by an order of magnitude (\Cref{sec:appendix-runtime}). The computational efficiency of our framework enables more adaptability to distributional shifts in either donor arrivals or waitlist composition. When inconsistencies are detected, we can dynamically replan a new dispatch. For example, replanning can be triggered when the $\PSI$ of the waitlist exceeds some threshold. 

\begin{figure}
    \centering
    \ifthenelse{\boolean{isSingleColumn}}
    {
        % Use 0.5 column width
        \includegraphics[width=0.5\linewidth]{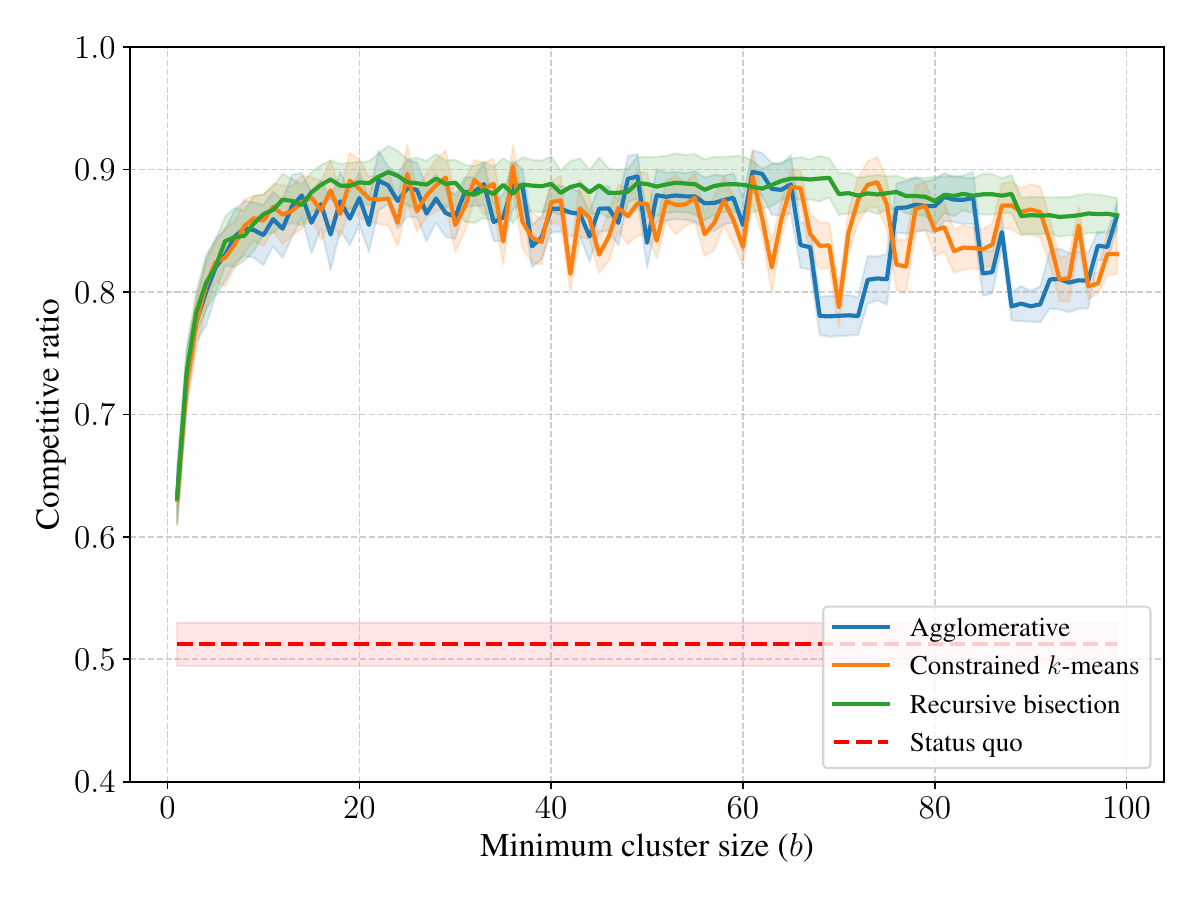}
    }
    {
        % Use full column width
        \includegraphics[width=0.99\linewidth]{figures/competitive_ratio_clustering.pdf}
    }
    \caption{Competitive ratio of our algorithm for different budgets $b$ and clustering approaches evaluated over $20$ simulations of donor arrivals from January to June 2019. The shaded region represents the standard deviation. Compared to $b=1$ and the \emph{status quo}, all results are statistically significant ($p<0.05$, Wilcoxon signed-rank test).}
    \label{fig:competitive_ratio}
\end{figure}
\section{Conclusions}

We introduced near-optimal online matching algorithms via a coarsening-based $b$-matching approach. Our algorithms are suitable for domains where offline nodes can be effectively partitioned into capacitated clusters. Although coarsening typically implies a loss of fidelity, we find that grouping nodes into capacitated clusters improves both theoretical guarantees and practical performance, even as the error of the clustering grows large. Coarsening also simplifies the offline planning phase, making the approach highly effective for domains where scalability is paramount, such as Internet advertising.

Another strength of our approach is its modularity. \Cref{alg:clustered_matching} is agnostic to the clustering subroutine. The framework benefits directly from advancements in unsupervised learning: utilizing better capacitated clustering algorithms may further improve the competitive ratio.

We applied our coarsening-based framework to heart transplant allocation. Ours are, to our knowledge, the first theoretically-grounded policies for this domain. Furthermore, experimentally based on real UNOS data, our algorithms achieve a competitive ratio of 0.91, drastically higher than non-capacitated stochastic matching's 0.63 and the US \emph{status quo} policy's 0.51. 

While we focused on maximizing efficiency (\textit{i.e.}, population survival), our approach is amenable to alternative metrics, including fairness. The weights themselves can be altered to prioritize disadvantaged groups~\citep{Dickerson2015:Futurematch}. The offline planning phase can also enforce fairness measures through additional constraints. 
%Ensuring fair outcomes is an important direction for future work. 

\section*{Impact statement}

Our work introduces a methodology for optimizing heart transplant allocation through a coarsening-based matching framework. While our results demonstrate the potential to significantly extend population survival, the deployment of such algorithmic tools in high-stakes healthcare settings entails significant risks and ethical responsibilities. We identify a number of technical and ethical limitations of our work that impact the safety of deployment.

\paragraph{Data bias and predictive uncertainty} We rely on historical data from the UNOS registry. This data reflects decades of clinical practice and it inevitably contains systemic biases related to healthcare access, candidate selection, and historical inequities. These biases can in turn affect the predictive models underlying our edge weights and risks amplifying inequity. In addition, determining patient outcomes, such as graft survival after a transplant, is a challenging predictive task. Unobserved confounders not present in the registry may influence actual outcomes.  Despite the coarsening-based $b$-matching approach being mathematically agnostic to how the weights are set, deploying any such algorithm relying on imperfect weights can have unintended consequences. Future improvements in survival prediction and bias mitigation are essential for this application.

\paragraph{Ethics of aggregation} The coarsening-based approach also introduces a tradeoff between aggregate performance, which comes at the cost of individual fairness. When clustering patients, we obtain a single representative profile for the group. We lose the nuances of each patient's specific case. While this improves theoretical and simulated performance, it risks benefiting the average patient at the expense of outliers. Our approach also seeks to maximize the population life years gained, which we acknowledge is just one of many possible objectives for the application. We do not necessarily argue that utilitarian maximization is the sole or most appropriate metric for organ allocation.

\paragraph{System dynamics and safety} Our simulation makes simplifying assumptions regarding system dynamics. We assume that the set of waiting patients remains static over the planning horizon. While we show in \Cref{sec:waitlist_stability} that the distribution of patients on the waitlist is relatively stable, this abstraction ignores a patient's individual medical progression. The offline linear program also assumes that future donor arrival rates will mirror historical distributions. Addressing these non-stationary dynamics is an important direction for future research. Deploying a stationary policy in a non-stationary world could lead to brittle decision making if not carefully managed. As a partial solution to this, we suggested metrics of distribution shift that can be used to trigger reoptimization of the dispatch plan.

%Consequently, we envisage this framework potentially functioning as a decision-support tool. It is designed to augment, not replace, the judgment of clinical experts, who must ultimately bridge the gap between algorithmic recommendations and the evolving, individualized needs of patients on the waitlist.
\section*{Acknowledgments}

Tuomas Sandholm and his PhD students Ioannis Anagnostides and Itai Zilberstein are supported by NIH award A240108S001, the Vannevar Bush Faculty Fellowship ONR N00014-23-1-2876, and National Science Foundation grant RI-2312342. Itai Zilberstein is also supported by the NSF Graduate Research Fellowship Program under grant DGE2140739. Arman Kilic is supported by NIH RO1 grant 5R01HL162882-03 which contributed to the funding for completion of this project. Arman Kilic is a speaker and consultant for Abiomed, Abbott, 3ive, and LivaNova, and founder and owner of QImetrix. All additional authors have no financial relationships to disclose. Any opinions, findings, and conclusions or recommendations expressed in this material are those of the author(s) and do not necessarily reflect the views of the funding agencies. We are indebted to Carlos Martinez from UNOS for answering numerous questions.

\bibliography{refs}

\clearpage
\appendix

\section{Further prior work on online matching}
\label{sec:related-matching}

There has been a considerable amount of work on online matching problems, primarily motivated by Internet advertising applications~\citep{Huang24:Online,Mehta10:Online}. The setting here is as follows. There is an underlying bipartite graph $G = (U, V, E)$, where $U$ is the set of offline nodes and $V$ is the set of nodes arriving online. In the basic version of the problem, each offline node $u \in U $ can be matched to at most one online node. As we highlighted earlier in the main body, closely related to our approach is the \emph{$b$-matching} generalization of the problem, in which a node $u \in U$ can be matched to up to $b \in \N$ nodes in $V$~\citep{Kalyanasundaram00:Optimal,Brubach16:New,Alaei12:Online,Devanur09:Adwords}. %In the context of Internet advertising, which has driven much of this research, $U$ represents the set of \emph{advertisers}, while $V$ is the stream of \emph{impressions}. The edge set $E$ captures the compatibility, where an edge $(u, v)$ exists if advertiser $u$ is interested in bidding on impression $v$. In practical settings, to ensure market thickness and computational tractability, automated abstraction methods are often employed to bundle inventory or segments, effectively grouping distinct instances into larger $b$-matching constraints~\citep{Walsh2010:Automated,Peng2016:Scalable}.

The foundational work on online matching was by~\citet{Karp90:Optimal}, who gave a randomized algorithm that attains the optimal competitive ratio of $1 - 1/e$ even against an adversarial arrival sequence. This was subsequently generalized to the \emph{adwords} problem by~\citet{Mehta07:Adwords}, and there has been much follow up work even since. The premise of having an adversarial arrivals is overly pessimistic in our setting, wherein we have ample historical data concerning donor organ arrivals.

Of more relevance to us is the \emph{stochastic, known, i.i.d.} model of arrivals. Here, the bipartite graph is known beforehand and each arriving node $v$ is drawn (with replacement) from a known distribution on $V$. This model is motivated by the fact that, in practice, one often has more background data concerning the impressions and can estimate the frequency of upcoming arrivals.  In the edge-weighted model~\citep{Feldman09:Online}, one can think of each advertiser as gaining a certain revenue---given by the weight of the edge---for being matched to a particular type of impression. A special case of this model is vertex-weighted online matching~\citep{Aggarwal11:Online}, where weights are only associated with the advertisers. This data-driven perspective aligns with the optimize-and-dispatch architecture found in ad exchanges~\citep{Parkes2005:Optimize}, and similar stochastic optimization techniques have been adapted to handle dynamic arrivals in kidney exchange~\citep{Awasthi2009:Online}.

The specific setting of $b$-matching has a long history (\emph{e.g.},~\citealp{Kalyanasundaram00:Optimal,Alaei12:Online,Devanur09:Adwords}), but to our knowledge, such ideas have not been employed in the context of organ allocation before. Our approach in~\Cref{sec:theory} makes use of the framework of~\citet{Brubach16:New} which can also be used under stochastic rewards.
\section{Omitted proofs}
\label{sec:proofs}

\setcounter{table}{0}
\renewcommand{\thetable}{A\arabic{table}}

\setcounter{algorithm}{0}
\renewcommand{\thealgorithm}{A\arabic{algorithm}}

\setcounter{figure}{0}
\renewcommand{\thefigure}{A\arabic{figure}}

For completeness, we begin by providing the proof of~\Cref{thm:bmatching} following the argument of~\citet{Brubach16:New}.

\bmatching*

\begin{proof}
Let $A_t$ denote the number of times vertex $u$ has been matched at the start of round $t$. Define $B(u,t)$ as the event that vertex $u$ is \emph{safe} at the beginning of round $t$, which is to say that $A_t \leq b - 1$.

For any edge $e = (u, v) \in E$, let $X_e$ represent the total number of times edge $e$ is matched throughout the $T$ rounds. The expected value can be expressed as
\begin{equation}
\E[X_e] = \sum_{t=1}^{T} \pr[B(u,t)] \cdot \frac{r_v}{T} \cdot \frac{f_e}{r_v} \cdot p_e = \frac{f_e p_e}{T} \sum_{t=1}^{T} \pr[A_t \leq b-1].
\end{equation}

We proceed to establish an upper bound on $\Pr[A_t \geq b]$. For each round $1 \leq \tau \leq t$, define $Z_\tau$ as the indicator random variable for the event that $u$ is matched during round $\tau$. Consequently, $A_{t+1} = \sum_{\tau=1}^{t} Z_\tau$. For each $\tau$, we have
\begin{equation*}
\E[Z_\tau] \leq \sum_{v} \frac{r_v}{T} \cdot \frac{f_{u,v}}{r_v} \cdot p_{u,v} \leq \frac{b}{T}.
\end{equation*}

It follows that for any $t \leq T(1-\delta)$ with $0 < \delta < 1$, $\E[A_{t+1}] \leq (1-\delta)b$. Applying the Chernoff-Hoeffding concentration inequality yields
\begin{equation*}
\pr[A_{t+1} \geq b] \leq e^{-b\delta^2/3}.
\end{equation*}
Therefore,
\begin{align*}
\mathbb{E}[X_e] &= \frac{f_e p_e}{T} \sum_{t=1}^{T} \Pr[A_t \leq b-1] \\
&\geq \frac{f_e p_e}{T} \sum_{t=1}^{T(1-\delta)} \left(1 - e^{-b\delta^2/3}\right) \\
&= f_e p_e (1-\delta)\left(1 - e^{-b\delta^2/3}\right).
\end{align*}
For any $\epsilon > 0$, selecting $\delta \defeq b^{-1/2 + \epsilon}$ completes the proof since $\sum_{e \in E} w_e f_e p_e \geq \E[\opt]$, where $\E[\opt]$ is the hindsight optimal expected value.
\end{proof}

We continue with the proof of~\Cref{thm:clustered_bmatching}.

\clusteredbmatching*

\begin{proof}
Let us fix the size of the cluster $b$. To bound the error we proceed as follows. Let $E^*$ be the set of edges selected by the optimal, omniscient policy. By definition, the expected value of this matching is 
\begin{equation*}
    \E[\opt] = \sum_{(u_k, v) \in E^*} w_{u_k, v} p_{u, v}.
\end{equation*}
By feasibility, we know that every node $u_k$ can only be matched once. Now, let $\E[\overline{\opt}]$ be the optimal in hindsight expected value under the representative weights. That is, $\overline{\opt}$ concerns the induced $b$-matching problem in which each cluster $\{u_1, \dots, u_b \}$ is replaced by a node $u$ with capacity $b$. By optimality, it follows that 
\[
\E[\overline{\opt}] \geq \sum_{(u_k, v) \in E^*} \bar{w}_{u, v} p_{u, v}
\]
since $E^*$ induces a feasible solution for the $b$-matching problem. By~\Cref{ass:cluster_relative}, we know that $\bar{w}_{u, v} \geq (1 + \delta(b))^{-1} w_{u_k, v}$ for any $k \in [b]$. So, $\E[\overline{\opt}] \geq (1 + \delta(b))^{-1} \E[\opt]$.

Similarly, let $\overline{\alg}$ be~\Cref{alg:bmatching} under the representative weights. The proposed algorithm $\alg$ proceeds by matching an incoming node $v$ to a cluster $u$ per~$\overline{\alg}$, whereupon it picks any unmatched node $u_k$ (if any) within that cluster. By~\Cref{ass:cluster_relative}, this ensures that
\begin{equation*}
    \E[\overline{\alg}] \leq \frac{1}{1 - \delta(b)} \E[\alg].
\end{equation*}

Finally, we know from~\Cref{thm:bmatching} that relative to the $b$-matching instance induced by the representative weights,
\begin{equation*}
 \frac{\E[\overline{\alg}]}{\E[\overline{\opt}]} \geq \alpha(b).
\end{equation*}
Combining these bounds, the claim follows.
\end{proof}

More broadly, suppose that $(1 - \delta_u(b)) \bar{w}_{u, v} \leq w_{u_k, v} \leq (1 - \delta_u(b))^{-1} \bar{w}_{u, v}$ for each cluster $u \in \mathcal{U}$; that is, we allow the clusters to have different errors. We define the \emph{average weighed error} with respect to the optimal solution as
\begin{equation}
    \label{eq:aver-weigh-error-opt}
    \Delta_\opt = \frac{ \sum_{(u_k, v) \in E^*} \delta_u w_{u_k, v} p_{u, v}}{ \E[\opt]}.
\end{equation}
This acts as a weighted average where the weight of a cluster's error $\delta_u(b)$ is proportional to the cluster's share of the optimal value $\E[\opt]$. As a result, the error is driven by cluster variance and cluster importance: the performance degrades only if the optimal policy relies heavily on clusters with high approximation errors. This definition allows us to write $\E[\overline{\opt}] \geq (1 - \Delta_\opt) \E[\opt]$.

Similarly, if we define
\begin{equation}
    \label{eq:aver-weigh-error-alg}
    \Delta_\alg = \frac{ \sum_{(u_k, v) \in E_{\alg}} \delta_u \bar{w}_{u, v} p_{u, v} }{\E[\overline{\alg}]},
\end{equation}
it follows that $\E[\alg] \geq (1 - \Delta_\alg) \E[\overline{\alg}]$. Above, $E_{\alg}$ denotes the (random) set of edges chosen by $\alg$. $\Delta_\opt$ and $\Delta_\alg$ are \emph{a posteriori} error measures, as they are contingent on the specific edges selected by $\alg$ and $\opt$. Following the proof of~\Cref{thm:clustered_bmatching}, we can obtain the following result.

\begin{theorem}
\label{thm:clustered_avg}
Consider a weighted online stochastic matching problem. If~$\mathsf{ALG}$ is set to~$\mathsf{SM}_b$ on the $b$-matching instance induced by the representative weights,
\begin{equation*}
\frac{\E[\mathsf{ALG}]}{\E[\opt]} \geq \alpha(b) ( 1 - \Delta_\opt) (1 - \Delta_\alg),
\end{equation*}
where $\Delta_\opt$ and $\Delta_\alg$ are defined in~\eqref{eq:aver-weigh-error-opt} and~\eqref{eq:aver-weigh-error-alg}, respectively.
\end{theorem}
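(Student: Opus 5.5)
The plan is to repeat the three-step chain from the proof of \Cref{thm:clustered_bmatching}, replacing the two uniform factors $(1+\delta(b))^{-1}$ and $1-\delta(b)$ with the a posteriori averages $1-\Delta_\opt$ and $1-\Delta_\alg$. Fix $b$ and the clustering. Let $\overline{\opt}$ be the hindsight optimum of the induced $b$-matching instance with representative weights, where each cluster $u$ becomes a single node of capacity $b$. Let $\overline{\alg}$ be $\mathsf{SM}_b$ run on that instance. The chain to establish is
\[
\E[\alg] \geq (1-\Delta_\alg)\E[\overline{\alg}] \geq (1-\Delta_\alg)\,\alpha(b)\,\E[\overline{\opt}] \geq (1-\Delta_\alg)(1-\Delta_\opt)\,\alpha(b)\,\E[\opt].
\]
The middle inequality is exactly \Cref{thm:bmatching} applied to the representative instance, so only the two outer inequalities need work.

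\textbf{Step 1 (lower-bounding $\E[\overline{\opt}]$).} Take the edge set $E^*$ of the omniscient policy. Each $u_k$ is used at most once, so each cluster $u$ is used at most $b$ times. Hence $E^*$, with each $(u_k,v)$ relabelled as $(u,v)$, is feasible for the induced $b$-matching, and
\[
\E[\overline{\opt}] \geq \sum_{(u_k,v)\in E^*} \bar w_{u,v}\, p_{u,v}.
\]
The heterogeneous hypothesis $w_{u_k,v}\le (1-\delta_u)^{-1}\bar w_{u,v}$ gives $\bar w_{u,v} \ge (1-\delta_u) w_{u_k,v}$. Summing over $E^*$ and using $p_{u_k,v}=p_{u,v}$, the right-hand side is at least
\[
\E[\opt] - \sum_{(u_k,v)\in E^*} \delta_u w_{u_k,v} p_{u,v} = (1-\Delta_\opt)\E[\opt],
\]
by the definition \eqref{eq:aver-weigh-error-opt}. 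This step is a direct computation.

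\textbf{Step 2 (comparing $\alg$ with $\overline{\alg}$).} Couple $\alg$ and $\overline{\alg}$ on the same arrivals, the same independent match coins, and the same edge-presence coins. Then $\alg$ makes a successful match to some $u_k$ in cluster $u$ exactly when $\overline{\alg}$ makes a successful match to $u$ that respects the capacity $b$. The indeterminate over-capacity matches are resolved by discarding them in both algorithms, which \Cref{thm:bmatching} permits. On each such edge the realized weight satisfies $w_{u_k,v} \ge (1-\delta_u)\bar w_{u,v}$. Summing over the realized edges $E_\alg$ and taking expectations gives
\[
\E[\alg] \ge \E[\overline{\alg}] - \E\Bigl[\sum_{(u_k,v)\in E_\alg} \delta_u \bar w_{u,v} p_{u,v}\Bigr].
\]
By \eqref{eq:aver-weigh-error-alg}, the subtracted term equals $\Delta_\alg\E[\overline{\alg}]$.

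\textbf{Main obstacle.} The delicate point is Step 2: the random quantities in $\Delta_\alg$ must be read consistently. The numerator in \eqref{eq:aver-weigh-error-alg} has to be interpreted as an expectation over the random set $E_\alg$, with $p_{u,v}$ standing in for the edge-presence coin via independence. Otherwise $\Delta_\alg$ is a random variable and the inequality would hold only pathwise, not in expectation. Two more checks are needed. First, the coupling must genuinely give a one-to-one correspondence between $\alg$'s matches and $\overline{\alg}$'s within-capacity matches. Second, if $\alg$ selects greedily rather than arbitrarily within a cluster, this must not change which cluster-level matches occur; it does not, because cluster-level feasibility depends only on counts. With these settled, chaining the three inequalities proves the theorem.
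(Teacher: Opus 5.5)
Your proposal is correct and follows the paper's argument: the paper reuses the chain from the proof of \Cref{thm:clustered_bmatching}, with $\Delta_\opt$ and $\Delta_\alg$ defined precisely so that $\E[\overline{\opt}] \geq (1-\Delta_\opt)\E[\opt]$ and $\E[\alg] \geq (1-\Delta_\alg)\E[\overline{\alg}]$, and these two bounds sandwich \Cref{thm:bmatching} applied to the representative instance. Your remark that the numerator of \eqref{eq:aver-weigh-error-alg} must be read as an expectation is a fair tightening of the paper's loose definition, and the same caveat applies to \eqref{eq:aver-weigh-error-opt}, since $E^*$ also depends on the random arrivals.
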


We proceed with the proof of~\Cref{thm:bad_clusters}.

\badclusters*

\begin{proof}
The representative weights on edges incident $\Ubad$ will be defined as zero. To extend the analysis of~\Cref{thm:clustered_bmatching} in this setting, we proceed as follows. We write $\E[\alggood]$ and $\E[\algbad]$ to denote the expected weight obtained by $\alg$ restricted on $\Ugood$ and $\Ubad$, respectively, and similarly for $\overline{\alg}$. We have
\begin{equation*}
    \E[\overline{\alg}] = \E[\baralggood] + \E[\baralgbad].
\end{equation*}
From~\Cref{thm:clustered_bmatching}, we know that $\E[\overline{\alg} ] \geq \alpha(b) \overline{\opt}$. Furthermore, $\E[\baralgbad] = 0$ since the representative weights on edges incident to $\Ubad$ have been defined as zero. In other words, $\E[\baralggood] \geq \alpha(b) \E[\overline{\opt}]$.

Since weights are nonnegative, $\E[\alg] \geq \E[\alggood] \geq (1 - \delta(b)) \E[\baralggood] \geq \alpha(b) (1 - \delta(b)) \E[\overline{\opt}]$, where we used the fact that $\Ugood$ contains clusters with relative error bounded by $\delta(b)$ (\Cref{ass:most_cluster_good}). 

Finally, we have $\E[\optgood] = \E[\opt] - \E[\optbad] \geq (1 - \rho) \E[\opt]$, by~\Cref{ass:most_cluster_good}, and $\E[\overline{\opt}] \geq \E[\baroptgood] \geq (1 + \delta(b))^{-1} \E[\optgood]$. As a result,
\begin{equation*}
    \E[\overline{\opt}] \geq \frac{1 - \rho(b)}{1 + \delta(b)} \E[\opt].
\end{equation*}
Combining these bounds, the statement follows.
\end{proof}

Similar reasoning establishes the following result.

\begin{theorem}
    \label{thm:bounded-est-most}
Consider a weighted online stochastic matching problem with $\eta$-bounded estimation error (\Cref{ass:imprecise_weights}) satisfying \Cref{ass:most_cluster_good} relative to the given weights. Then there exists an algorithm $\mathsf{ALG}$ such that
\begin{equation*}
\frac{\E[\mathsf{ALG}]}{\E[\opt]} \geq (1 - 2\eta) \max_{b \in \mathcal{B}} \alpha(b) ( 1-\rho(b)) ( 1 - 2 \delta(b)).
\end{equation*}
\end{theorem}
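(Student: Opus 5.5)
The plan is to combine the proof of \Cref{thm:bad_clusters} with a sandwiching argument between the given weights $\{w_e\}$ and the true weights $\{w^*_e\}$. Fix $b \in \mathcal{B}$ and the clustering from \Cref{ass:most_cluster_good}, which is stated relative to the given weights. Let $\alg$ be the algorithm constructed in \Cref{thm:bad_clusters} for the given weights. It sets the representative weights on $\Ubad$ to zero, solves the LP \eqref{eq:obj}--\eqref{eq:arrival} on the induced $b$-matching instance, and dispatches within clusters. Since $\alg$ only ever sees the given weights, its random set of chosen edges $E_\alg$ is the same whether we evaluate it under $w$ or $w^*$. Only the evaluation changes.

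We then compare each side of the ratio separately.

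\textbf{Numerator.} Write $\E[\alg]_w$ and $\E[\alg]_{w^*}$ for the value of $\alg$ measured in the given and true weights. By \Cref{ass:imprecise_weights}, $w^*_e \ge (1-\eta) w_e$ edgewise. Linearity of expectation over $E_\alg$ gives $\E[\alg]_{w^*} \ge (1-\eta)\E[\alg]_w$.

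\textbf{Denominator.} Let $E^*$ be the edge set chosen by the true omniscient optimum $\opt^*$. This edge set is also feasible in hindsight under the given weights, so
\[
\E[\opt]_w \ge \sum_{e\in E^*} w_e p_e \ge (1+\eta)^{-1} \E[\opt^*].
\]
This uses the same "feasible solution for the other weighting" trick as in the proof of \Cref{thm:clustered_bmatching}.

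\textbf{Middle step.} Apply \Cref{thm:bad_clusters} verbatim to the given-weight instance. Its hypotheses, namely \Cref{ass:most_cluster_good}, are assumed relative to the given weights. This yields
\[
\E[\alg]_w \ge \alpha(b)(1-\rho(b))(1-\delta(b))(1+\delta(b))^{-1}\E[\opt]_w.
\]

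\textbf{Combining.} Chaining the three bounds gives
\[
\frac{\E[\alg]_{w^*}}{\E[\opt^*]} \ge \frac{1-\eta}{1+\eta}\,\alpha(b)(1-\rho(b))\,\frac{1-\delta(b)}{1+\delta(b)}.
\]
Two elementary inequalities finish the argument. First, $\frac{1-x}{1+x} \ge 1-2x$ for $x\in[0,1)$, which is equivalent to $1-x \ge 1+x-2x-2x^2$, i.e.\ $2x^2\ge 0$. Applying this with $x=\eta$ and $x=\delta(b)$ gives the factors $(1-2\eta)$ and $(1-2\delta(b))$. Finally, we choose $b$ to maximize over $\mathcal{B}$, as in \Cref{alg:clustered_matching}.

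\textbf{Main obstacle.} The delicate point is the role of $\rho(b)$. \Cref{ass:most_cluster_good} bounds matchings on $\Ubad$ by $\rho(b)\opt$ in terms of the given weights, and that is what \Cref{thm:bad_clusters} consumes. Because I route the whole argument through the given-weight instance and convert to true weights only at the two ends, $\rho(b)$ never has to be re-expressed in true weights. I must still verify one thing: that the "$\opt$" appearing in the assumption is the given-weight optimum. I will read "relative to the given weights" that way.

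If instead one insisted on a true-weight $\rho$, one would incur an extra factor $(1+\eta)/(1-\eta)$ on $\rho$. That factor could be absorbed only approximately. So I would first try to keep everything in given weights, as described above.
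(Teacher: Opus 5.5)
Your proposal is correct and is essentially the argument the paper means by ``similar reasoning.'' You run the \Cref{thm:bad_clusters} algorithm on the given weights, reading $\rho(b)$ in given weights, then lose $(1-\eta)$ on the algorithm's side and $(1+\eta)^{-1}$ on the benchmark's side, and convert via $\frac{1-x}{1+x}\ge 1-2x$. The paper's statement shares one tacit caveat: replacing $\frac{1-x}{1+x}$ by $1-2x$ inside the product (and chaining lower bounds through $\E[\opt]_w$) needs the other factors nonnegative, i.e.\ $\alpha(b)\ge 0$ and $\eta,\delta(b)\le 1/2$.
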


\section{Capacitated clustering algorithms}
\label{sec:appendix-clustering}

Standard clustering algorithms like $k$-means or hierarchical clustering do not enforce minimum cluster sizes. While minimum-cost flow formulations can enforce these constraints~\citep{Bradley2000:Constrained}, they require iterative solving and remain computationally prohibitive for datasets of this magnitude.

We adapt these standard algorithms by applying a two-phase heuristic repair strategy~\citep{Mulvey1984:Solving,Ahmadi2005:Greedy} that enforces a minimum capacity $b$ while preventing the formation of excessively large clusters. We initialize the number of clusters to $\lfloor |U|/b \rfloor$. After the initial unconstrained clustering, we apply a greedy repair procedure. 

First, in the \textit{merge phase}, we identify all clusters with cardinality less than $b$ and sort them by ascending size. Starting with the smallest, we iteratively merge each violating cluster into its nearest neighbor based on the centroid distance. We prioritize merging two under-capacity clusters together before absorbing a small cluster into a sufficiently large one. Second, in the \textit{split phase}, we identify clusters that are excessively large (\emph{e.g.}, $|u| \ge 2b$) and decompose them using the recursive bisection algorithm described below. 

We use the standard $k$-means algorithm to leverage its linear scalability, followed by the repair step to enforce the capacity constraint. Agglomerative clustering works bottom-up, iteratively merging the two closest clusters until the desired number of clusters is reached, at which point the repair process begins. 

\textit{Recursive bisection}~\citep{Karypis2000:Comparison} offers another efficient heuristic approximation by using a top-down divisive strategy. Recursive bisection clustering reduces the problem to a sequence of $2$-means clustering. When the $2$-means split fails to satisfy the cardinality requirement, we default to geometric partitioning. We project all points onto the vector connecting the two centroids and split along this axis at the index closest to the median such that the minimum size threshold is respected.  This yields an approximate solution with a time complexity of roughly $O(N \log(N/b))$. Recursive bisection is highly efficient even for massive patient registries. 

\section{Error measures and selecting a suitable capacity}
\label{sec:appendix-hcr}

We define the error metrics relative to each node offline $u$. Let $w_{u,v}$ denote the edge weight between $u$ and $v$, and let $\bar{w}_{c(u),v}$ denote the representative weight of the cluster $c(u)$ assigned to $u$.

First, we calculate the \textit{mean absolute error (MAE)} for a node $u$ by averaging the deviation across all online nodes $v \in V$:
\begin{equation*}
    \text{MAE}_u = \frac{1}{|V|} \sum_{v \in V} | w_{u,v} - \bar{w}_{c(u),v} |.
\end{equation*}

To facilitate comparison, we normalize this error by the global maximum utility observed in the dataset, $w_{\max} = \max_{u,v} w_{u,v}$. We normalize by the maximum rather than the mean to ensure stability, as the mean utility can be near-zero. The \textit{normalized mean absolute error (NMAE)} is:
\begin{equation*}
    \text{NMAE}_u = \frac{\text{MAE}_u}{w_{\max}}.
\end{equation*}

The mean and maximum NMAE are defined as:
\begin{equation*}
    \text{NMAE}_{\textsc{mean}} = \frac{1}{|U|} \sum_{u \in U} \text{NMAE}_u, \quad \text{NMAE}_{\max} = \max_{u \in U} \text{NMAE}_u.
\end{equation*}

To select the optimal $b$ in the offline phase, we need to bridge the gap between the pessimistic theoretical bound and empirical reality. We introduce a \textit{heuristic competitive ratio} ($\HCR$) to approximate the relative competitive ratio of the online algorithm. The heuristic competitive ratio estimates the relative performance using the formula 
\begin{equation*}\label{eq:hcr}
    \HCR(b) \coloneqq \left(1-\frac{1}{\sqrt{b}}\right)\cdot \left(1 -\text{NMAE}_{\max}\right).
\end{equation*}

The $\HCR$ is an overly conservative estimate of the competitive ratio (due to lower bounds and omission of constant factors), and therefore only shows relative values for different capacities $b$. It can be used to identify, in advance, a suitable minimum cluster size.  We show the $\HCR$ in \Cref{fig:hcr} for different clustering algorithms and capacities $b$.  The $\HCR$ highlights the tradeoff between larger capacities and the error of the clustering. We find that a minimum cluster size of $10$ to $30$ is optimal. We see in online simulations that these values of $b$ indeed result in the strongest competitive ratios. An alternative way to tune $b$ is to simply evaluate outcomes and select the capacity with the strongest competitive ratio in offline simulations. 

\begin{figure}[h]
    \centering
    \includegraphics[width=0.5\linewidth]{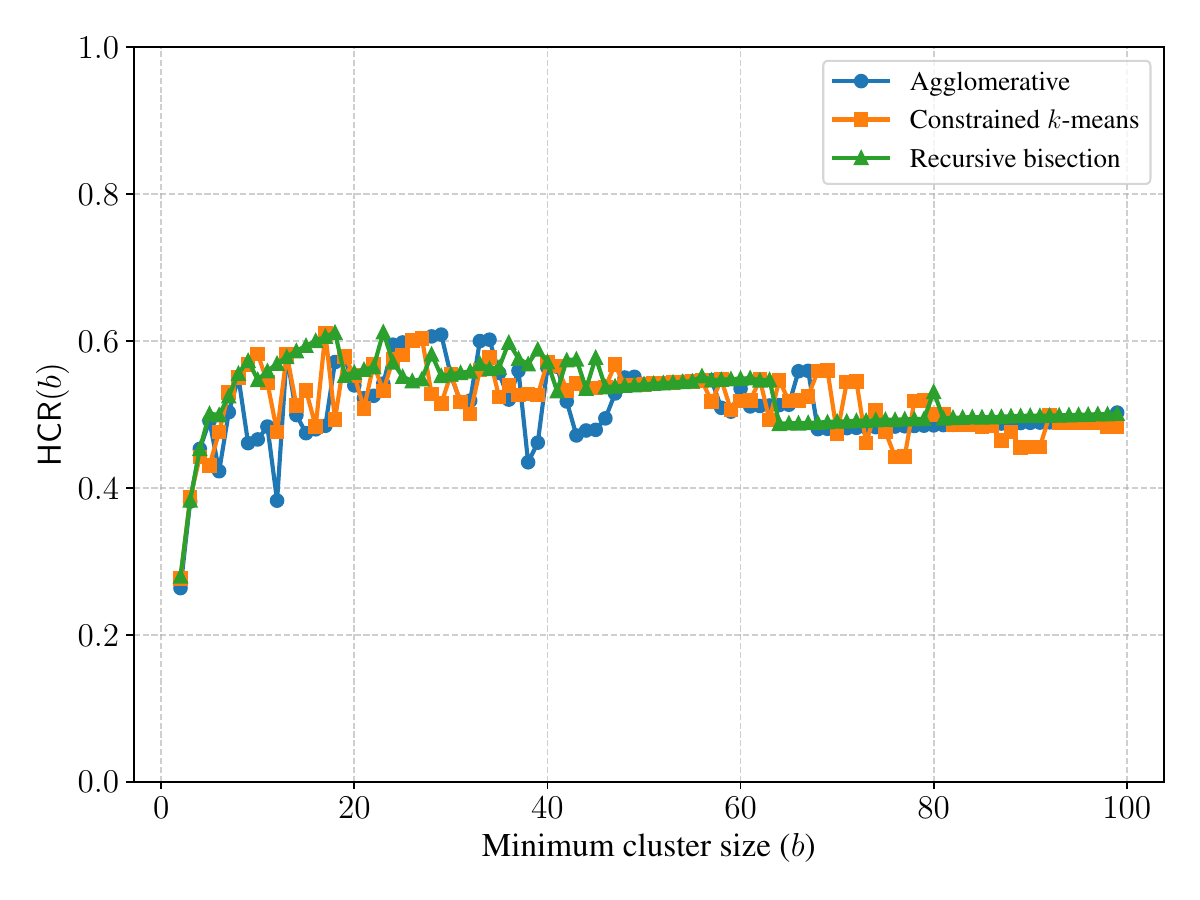}
    \caption{Heuristic competitive ratio of clustering algorithms across cluster sizes.}
    \label{fig:hcr}
\end{figure}

\section{Run time of algorithms}
\label{sec:appendix-runtime}

\begin{figure*}[h!]
    \centering
    % (a) clustering
    \begin{subfigure}[b]{0.45\textwidth}
        \centering
        \includegraphics[width=\linewidth]{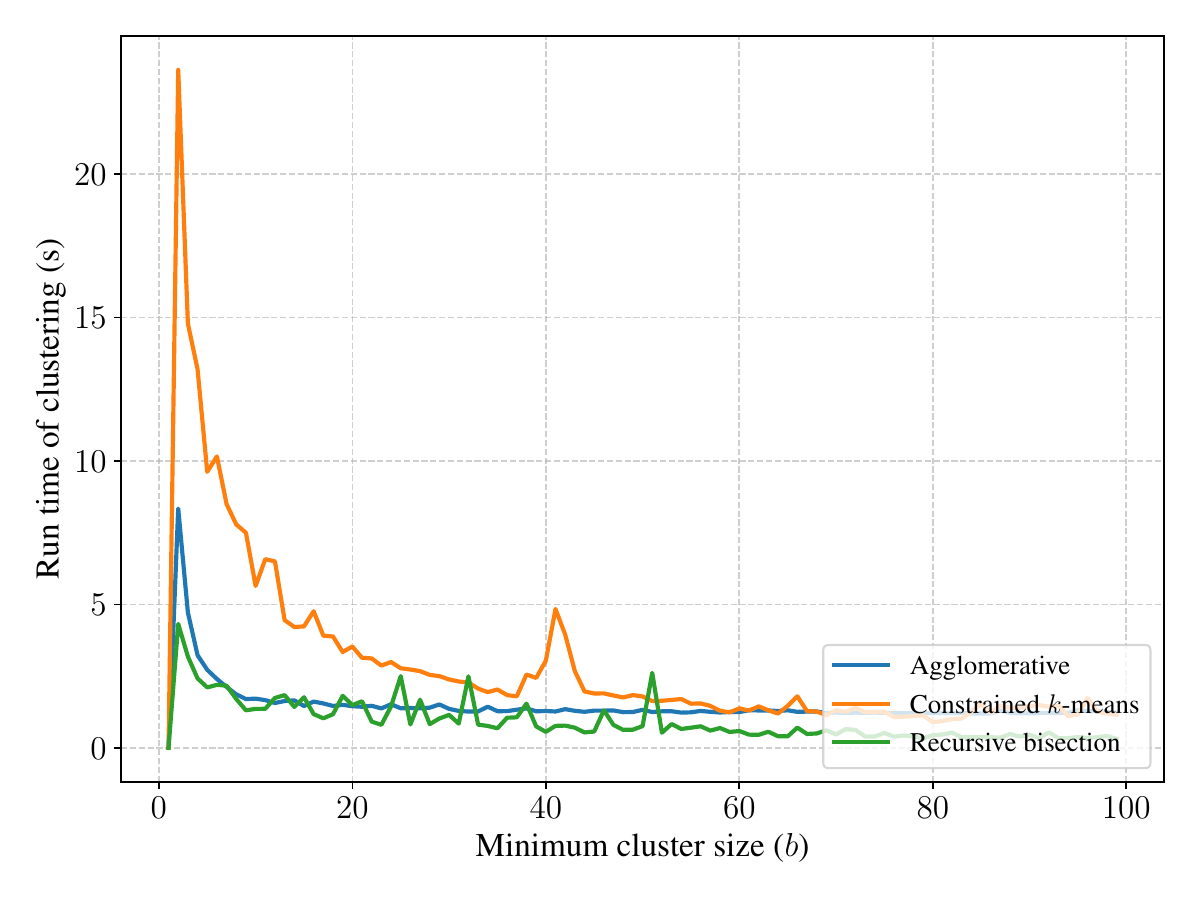}
        \caption{Run time of capacitated clustering algorithms (s).}
        \label{fig:runtime_a}
    \end{subfigure}
    \hfill
    \begin{subfigure}[b]{0.45\textwidth}
        \centering
        \includegraphics[width=\linewidth]{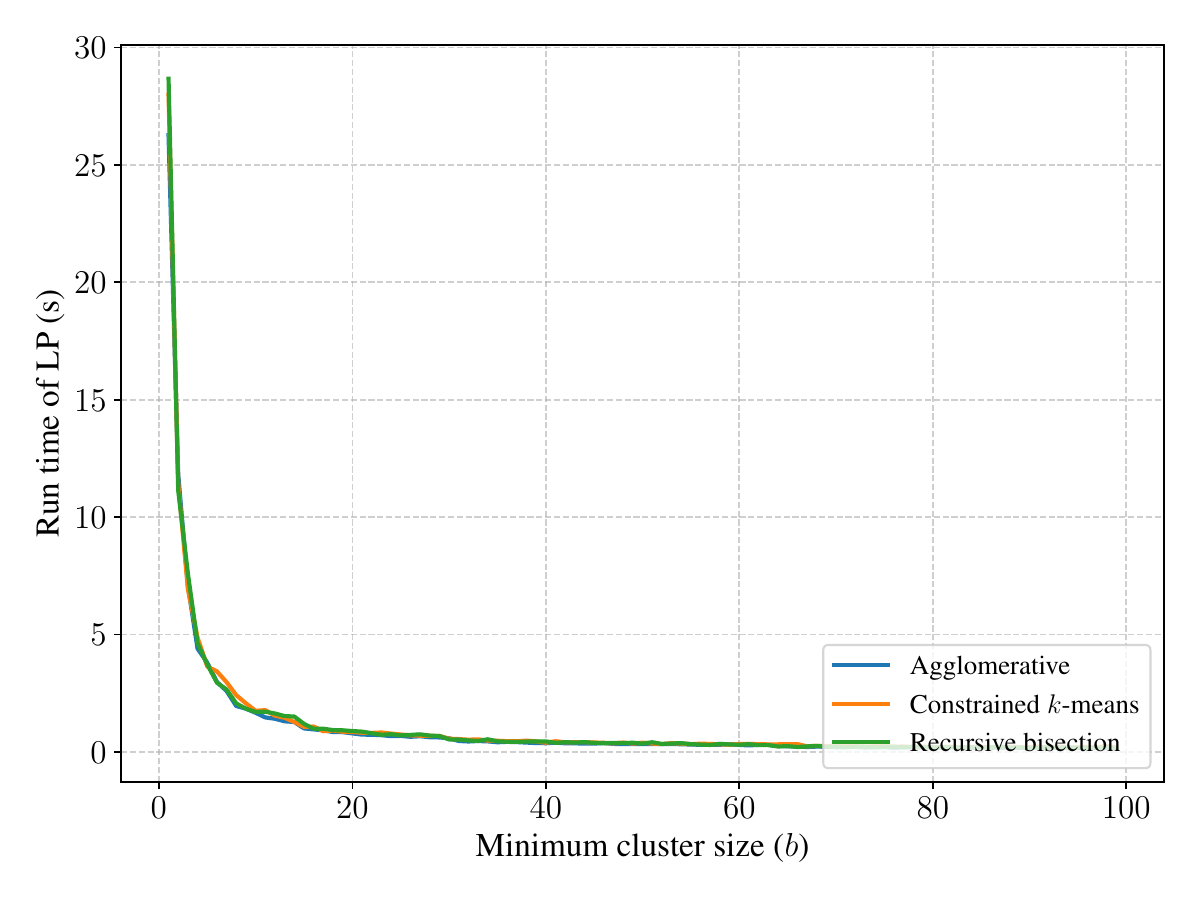}
        \caption{Run time of offline linear program (s).}
        \label{fig:runtime_b}
    \end{subfigure}
    
    \caption{Run time of the components of our algorithm across cluster sizes. Panel (a): runtime of capacitated clustering algorithms. 
    Panel (b): runtime of offline linear program.}
    \label{fig:runtime}
\end{figure*}

We analyze the computational cost of our framework in \Cref{fig:runtime}. The total run time consists of the clustering step and solving the offline linear program. All experiments are conducted on a machine equipped with an Apple M4 Pro chip (14-core CPU) and 24 GB of unified memory. 

For $b=1$, the process is entirely dominated by the linear program, which takes approximately $30$ seconds. However, as $b$ increases, the number of decision variables in the LP reduces proportionally, driving the solve time down to under $2$ seconds for $b \ge 20$, an order of magnitude reduction. 

We observe an initial spike in run time for small capacitated clustering, particularly for constrained $k$-means. This is likely due to the difficulty of enforcing strict capacity constraints when the target clusters are small and numerous, requiring more iterative repairs. As $b$ increases, these constraints become easier to satisfy, and the clustering run time stabilizes below $3$ seconds.

For $b \ge 20$, the end-to-end process finishes in under $5$ seconds, an $80$--$90\%$ reduction compared to the $b=1$ baseline. Our coarsening-based approach not only improves solution quality, it simultaneously reduces the computational cost substantially.

\section{Resampling to avoid discarding}
\label{sec:appendix-resampling}

\begin{figure}[h]
    \centering
    \includegraphics[width=0.5\linewidth]{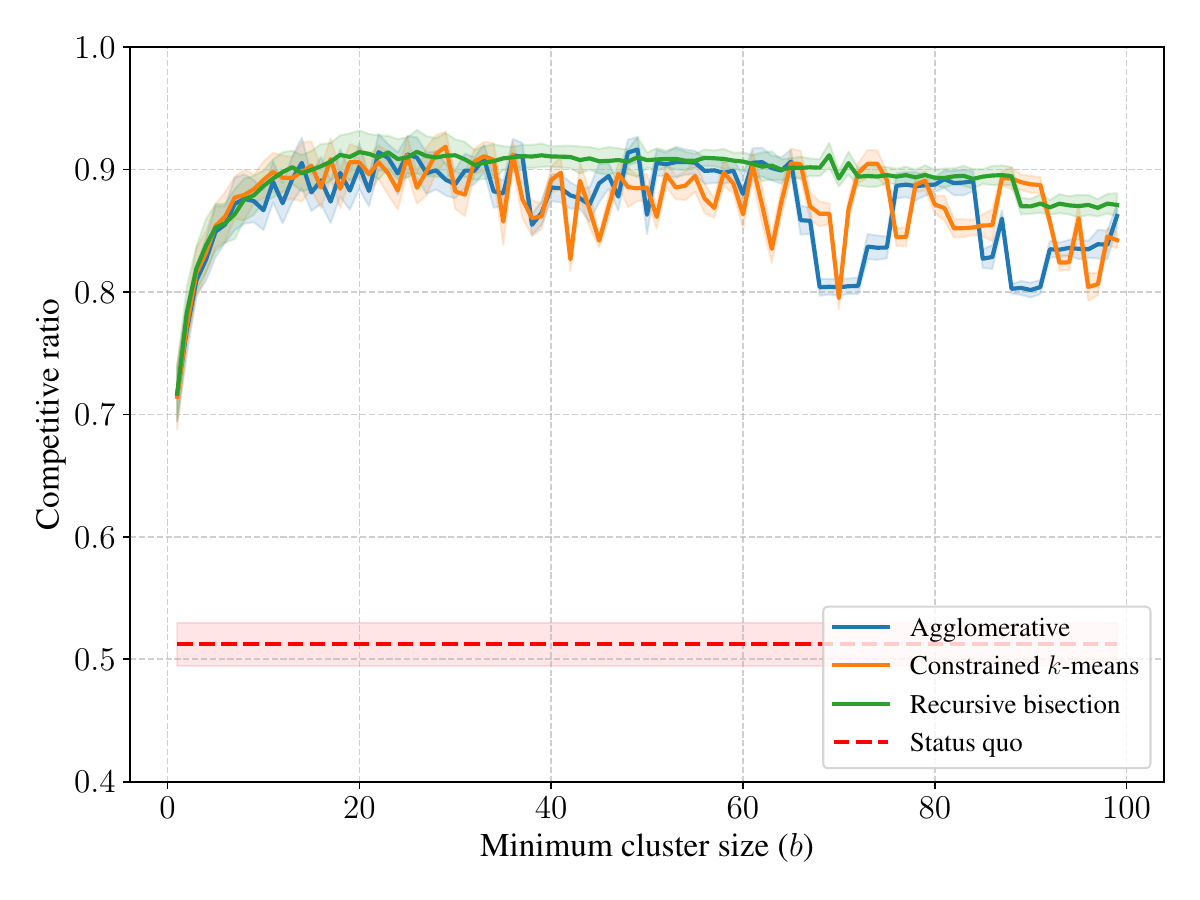}
    \caption{Competitive ratio of our algorithm when allowing resampling to avoid donor discarding for different budgets $b$ and clustering approaches evaluated over $20$ simulations of donor arrivals from January to June 2019. The shaded region represents the standard deviation.}
    \label{fig:discard}
\end{figure}

We previously evaluate our algorithm using a conservative dispatch that discards an organ if the offline solution does not match it (\textit{i.e.}, all available patients have been matched). However, in real-world transplant systems, the demand far outweighs the supply. Therefore, discarding a viable organ is almost never desired. 

We evaluate our algorithm using a dispatch that re-routes an organ that would otherwise be discarded. In the re-routing variant of our algorithm, if an organ is not initially matched, the algorithm resamples a new node from the LP strategy distribution, using the randomized policy conditional on the unavailability the previous choices. 

We present the results of this variant in \Cref{fig:discard}. The competitive ratio for the $b=1$ baseline improves compared to the strict discarding case, rising to approximately 0.72. However, the competitive ratio of the coarsening-based framework remains dominant. As $b$ increases, the competitive ratio rises to over 0.9. This shows that the benefits of clustering persist even when the baseline is allowed to recover from routing failures.

\section{Larger planning horizons and greedy selection}
\label{sec:appendix-greedy}

\begin{figure}[h!]
    \centering
    \includegraphics[width=0.5\linewidth]{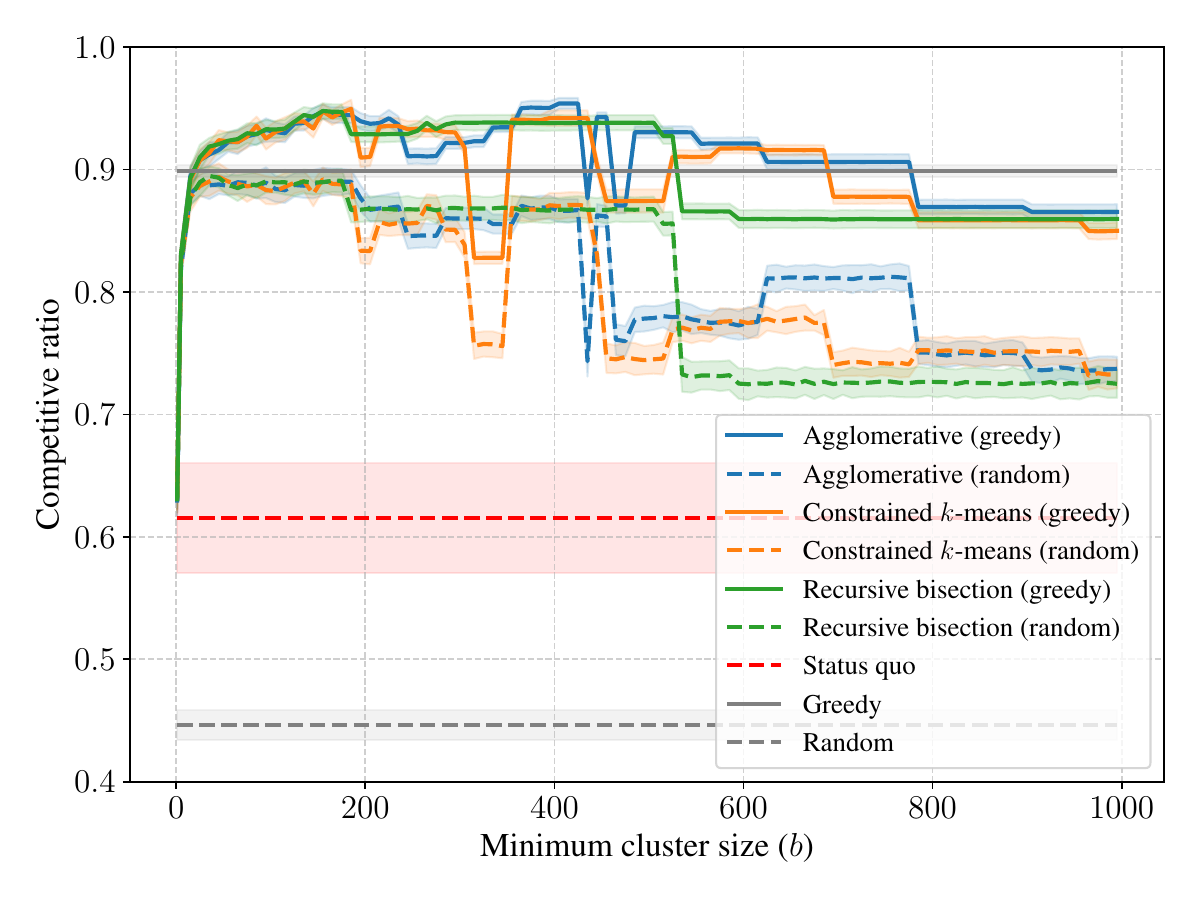}
    \caption{Competitive ratio of our algorithm for different budgets $b$, clustering approaches, and intra-cluster selection policies evaluated over $20$ simulations of $2,500$ donor arrivals from January to June 2019. The shaded region represents the standard deviation.}
    \label{fig:cr-greedy}
\end{figure}

In general, myopic decision-making has been shown to be effective for heart transplant allocation~\citep{Anagnostides25:Policy}. We test our method against a greedy baseline over an extended planning horizon of 2,500 donor arrivals. This volume approximates a market-clearing scenario where the supply of donors nearly matches the demand from the waitlist. While this diverges from the reality of organ allocation, it provides a valuable benchmark for other applications where supply and demand are balanced, and illustrates the robustness of our approach. 

\Cref{fig:cr-greedy} illustrates the performance of our algorithm using both random and greedy policies to select a node within a cluster. The greedy intra-cluster policy selects the patient within the assigned cluster maximizing the immediate edge weight. We observe that the global greedy baseline performs well, achieving a competitive ratio of approximately 0.9. This is comparable to our coarsening-based approach when using random intra-cluster selection.

However, combining the coarsening-based approach  with greedy intra-cluster selection yields superior performance, pushing the competitive ratio above 0.95. As the minimum cluster size increases to $|U|$, this strategy effectively converges to the global greedy policy. Yet at modest $b$, it consistently outperforms the global greedy baseline. Remarkably, even with random intra-cluster selection, coarsening outperforms the $b=1$ baseline and random matching at large $b$. This shows that coarsening is a robust strategy regardless of the capacity of the cluster. 

\section{Omitted tables and figures}

\begin{table*}[h]
\centering
\resizebox{\linewidth}{!}{
\begin{tabular}{r c c c c c c c c}
\toprule
\multirow{2}{*}{\textbf{Clusters ($k$)}} & \multicolumn{2}{c}{\textbf{Type O} ($n=39,341$)} & \multicolumn{2}{c}{\textbf{Type A} ($n=27,665$)} & \multicolumn{2}{c}{\textbf{Type B} ($n=8,020$)} & \multicolumn{2}{c}{\textbf{Type AB} ($n=1,664$)} \\
\cmidrule(lr){2-3} \cmidrule(lr){4-5} \cmidrule(lr){6-7} \cmidrule(lr){8-9}
 & \textbf{ASE} & \textbf{Var.} & \textbf{ASE} & \textbf{Var.} & \textbf{ASE} & \textbf{Var.} & \textbf{ASE} & \textbf{Var.} \\
\midrule
10   & 17.70 & 47.99\% & 17.60 & 48.13\% & 17.50 & 48.41\% & 16.40 & 50.40\% \\
50   & 7.96  & 76.60\% & 7.73  & 77.25\% & 7.61  & 77.61\% & 7.05  & 78.63\% \\
100  & 6.01  & 82.33\% & 5.68  & 83.29\% & 5.56  & 83.65\% & 4.76  & 85.59\% \\
250  & 3.92  & 88.47\% & 3.70  & 89.11\% & 3.47  & 89.80\% & 2.24  & 93.21\% \\
500  & 2.80  & 91.77\% & 2.56  & 92.48\% & 2.23  & 93.44\% & 0.80  & 97.59\% \\
1000 & 1.88  & 94.46\% & 1.66  & 95.12\% & 1.16  & 96.58\% & 0.10  & 99.71\% \\
\bottomrule
\end{tabular}}
\caption{Donor clustering across blood type partitions. We track the average squared error (ASE) and explained variance (Var.) as the number of representative types ($k$) increases.}
\label{tab:donor_clustering_results}
\end{table*}

\section{\emph{Status quo} heart transplant allocation policy}
\label{appendix:statusquo}

The US adult heart transplant allocation policy is a hierarchical system that prioritizes primarily based medical urgency status, blood type compatibility, and geographic proximity. It comprises 68 priority tiers, where 1 represents the highest priority. Each tier is defined by a combination of i) medical status ranging from status 1 (highest urgency) to status 6; ii) blood type compatibility (where \emph{secondary} blood compatibility applies only between type O donors and type A or type AB patients); and geographic proximity between donor and recipient location. \Cref{tab:priority_tiers} contains the detailed description of each tier.

\begin{table}[htbp]
\footnotesize
\centering
\caption{\emph{Status quo} priority tiers.}
\label{tab:priority_tiers}
\begin{tabular}{clll@{\hspace{1em}}|@{\hspace{1em}}clll}
\toprule
\textbf{Tier} & \textbf{Status} & \textbf{Blood match} & \textbf{Distance (nm)} & \textbf{Tier} & \textbf{Status} & \textbf{Blood match} & \textbf{Distance (nm)} \\
\midrule
1 & 1 & Primary & $\leq 500$ & 35 & 2 & Primary & $\leq 2500$ \\
2 & 1 & Secondary & $\leq 500$ & 36 & 2 & Secondary & $\leq 2500$ \\
3 & 2 & Primary & $\leq 500$ & 37 & 3 & Primary & $\leq 2500$ \\
4 & 2 & Secondary & $\leq 500$ & 38 & 3 & Secondary & $\leq 2500$ \\
5 & 3 & Primary & $\leq 250$ & 39 & 4 & Primary & $\leq 1000$ \\
6 & 3 & Secondary & $\leq 250$ & 40 & 4 & Secondary & $\leq 1000$ \\
7 & 1 & Primary & $\leq 1000$ & 41 & 5 & Primary & $\leq 1000$ \\
8 & 1 & Secondary & $\leq 1000$ & 42 & 5 & Secondary & $\leq 1000$ \\
9 & 2 & Primary & $\leq 1000$ & 43 & 6 & Primary & $\leq 1000$ \\
10 & 2 & Secondary & $\leq 1000$ & 44 & 6 & Secondary & $\leq 1000$ \\
11 & 4 & Primary & $\leq 250$ & 45 & 1 & Primary & Any \\
12 & 4 & Secondary & $\leq 250$ & 46 & 1 & Secondary & Any \\
13 & 3 & Primary & $\leq 500$ & 47 & 2 & Primary & Any \\
14 & 3 & Secondary & $\leq 500$ & 48 & 2 & Secondary & Any \\
15 & 5 & Primary & $\leq 250$ & 49 & 3 & Primary & Any \\
16 & 5 & Secondary & $\leq 250$ & 50 & 3 & Secondary & Any \\
17 & 3 & Primary & $\leq 1000$ & 51 & 4 & Primary & $\leq 1500$ \\
18 & 3 & Secondary & $\leq 1000$ & 52 & 4 & Secondary & $\leq 1500$ \\
19 & 6 & Primary & $\leq 250$ & 53 & 5 & Primary & $\leq 1500$ \\
20 & 6 & Secondary & $\leq 250$ & 54 & 5 & Secondary & $\leq 1500$ \\
21 & 1 & Primary & $\leq 1500$ & 55 & 6 & Primary & $\leq 1500$ \\
22 & 1 & Secondary & $\leq 1500$ & 56 & 6 & Secondary & $\leq 1500$ \\
23 & 2 & Primary & $\leq 1500$ & 57 & 4 & Primary & $\leq 2500$ \\
24 & 2 & Secondary & $\leq 1500$ & 58 & 4 & Secondary & $\leq 2500$ \\
25 & 3 & Primary & $\leq 1500$ & 59 & 5 & Primary & $\leq 2500$ \\
26 & 3 & Secondary & $\leq 1500$ & 60 & 5 & Secondary & $\leq 2500$ \\
27 & 4 & Primary & $\leq 500$ & 61 & 6 & Primary & $\leq 2500$ \\
28 & 4 & Secondary & $\leq 500$ & 62 & 6 & Secondary & $\leq 2500$ \\
29 & 5 & Primary & $\leq 500$ & 63 & 4 & Primary & Any \\
30 & 5 & Secondary & $\leq 500$ & 64 & 4 & Secondary & Any \\
31 & 6 & Primary & $\leq 500$ & 65 & 5 & Primary & Any \\
32 & 6 & Secondary & $\leq 500$ & 66 & 5 & Secondary & Any \\
33 & 1 & Primary & $\leq 2500$ & 67 & 6 & Primary & Any \\
34 & 1 & Secondary & $\leq 2500$ & 68 & 6 & Secondary & Any \\
\bottomrule
\end{tabular}
\end{table}

\end{document}